\definecolor{darkblue}{rgb}{0,0.08,0.45}
\newcommand{\T}{\mathsf{T}}
\newcommand{\K}{\kappa}
\renewcommand{\H}{\mathcal{H}}
\newcommand{\F}{\gF}
\newcommand{\X}{\gX}
\newcommand{\Y}{\gY}
\newcommand{\Z}{\gZ}
\newcommand{\Nxyz}{\gN_{\X, \Y, \Z}}
\newcommand{\Nxyzero}{\gN_{\X, \Y, 0}}
\newcommand{\B}{\gB}
\newcommand{\x}{x}
\newcommand{\y}{y}
\newcommand{\z}{z}
\newcommand{\vphi}{\bm{\phi}}
\newcommand{\vpsi}{\bm{\psi}}
\newcommand{\supnorm}[1]{\left\lVert #1 \right\rVert_{\sup}}
\newcommand{\Fmap}[2]{\left\langle #1, #2 \right\rangle_{\F_\X \times \F_\Y}}
\newcommand{\Bmap}[2]{\left\langle #1, #2 \right\rangle_{\B_\X \times \B_\Y}}
\newcommand{\BmapBig}[2]{\Big\langle #1, #2 \Big\rangle_{\B_\X \times \B_\Y}}
\newcommand{\Bmapbig}[2]{\big\langle #1, #2 \big\rangle_{\B_\X \times \B_\Y}}
\newcommand{\Bmaponetwo}[2]{\left\langle #1, #2 \right\rangle_{\B_1 \times \B_2}}
\newcommand{\leftpreimage}{\Bmap{\cdot}{g}^{-1} \left[ \Nxyzero \right]}
\newcommand{\rightpreimage}{\Bmap{f}{\cdot}^{-1} \left[\Nxyzero \right]}
\newcommand{\ns}{S}
\newcommand{\nt}{T}
\renewcommand{\dh}{{d}}
\DeclareMathOperator{\rank}{rank}
\DeclareMathOperator{\spn}{span}
\newcommand{\cls}{\overline{\spn}}
\theoremstyle{plain}
\newtheorem{theorem}{Theorem}
\newtheorem{lemma}{Lemma}
\newtheorem{proposition}{Proposition}
\newenvironment{customthm}[1]
  {\innercustomthm}
  {\endinnercustomthm}
\theoremstyle{definition}
\newtheorem{definition}{Definition}
\theoremstyle{remark}
\newtheorem{remark}{Remark}
\newcommand{\newterm}[1]{{\bf #1}}
\def\1{\bm{1}}
\def\vc{{\bm{c}}}
\def\vk{{\bm{k}}}
\def\vq{{\bm{q}}}
\def\vs{{\bm{s}}}
\def\vt{{\bm{t}}}
\def\vu{{\bm{u}}}
\def\vv{{\bm{v}}}
\def\evk{{k}}
\def\evq{{q}}
\def\evu{{u}}
\def\evv{{v}}
\def\mK{{\bm{K}}}
\def\mW{{\bm{W}}}
\DeclareMathAlphabet{\mathsfit}{\encodingdefault}{\sfdefault}{m}{sl}
\SetMathAlphabet{\mathsfit}{bold}{\encodingdefault}{\sfdefault}{bx}{n}
\def\gA{{\mathcal{A}}}
\def\gB{{\mathcal{B}}}
\def\gE{{\mathcal{E}}}
\def\gF{{\mathcal{F}}}
\def\gL{{\mathcal{L}}}
\def\gM{{\mathcal{M}}}
\def\gN{{\mathcal{N}}}
\def\gS{{\mathcal{S}}}
\def\gT{{\mathcal{T}}}
\def\gU{{\mathcal{U}}}
\def\gV{{\mathcal{V}}}
\def\gW{{\mathcal{W}}}
\def\gX{{\mathcal{X}}}
\def\gY{{\mathcal{Y}}}
\def\gZ{{\mathcal{Z}}}
\def\sA{{\mathbb{A}}}
\def\sF{{\mathbb{F}}}
\def\sN{{\mathbb{N}}}
\def\emK{{K}}
\newcommand{\R}{\mathbb{R}}
\DeclareMathOperator*{\argmin}{arg\,min}
\title{Transformers are Deep Infinite-Dimensional Non-Mercer Binary Kernel~Machines}
\date{}
\author{Matthew A.~Wright \textnormal{and} Joseph E.~Gonzalez \\
  University of California\\
  Berkeley, CA 94705 \\
  \texttt{\{mwright,jegonzal\}@berkeley.edu} \\
}
\begin{document}

\maketitle

\begin{abstract}
Despite their ubiquity in core AI fields like natural language processing, the mechanics of deep attention-based neural networks like the ``Transformer'' model are not fully understood.
In this article, we present a new perspective towards understanding how Transformers work.
In particular, we show that the ``dot-product attention'' that is the core of the Transformer's operation can be characterized as a kernel learning method on a pair of Banach spaces.
In particular, the Transformer's kernel is characterized as having an infinite feature dimension.
Along the way we consider an extension of the standard kernel learning problem to a binary setting, where data come from two input domains and a response is defined for every cross-domain pair.
We prove a new representer theorem for these binary kernel machines with non-Mercer (indefinite, asymmetric) kernels (implying that the functions learned are elements of reproducing kernel Banach spaces rather than Hilbert spaces), and also prove a new universal approximation theorem showing that 
the Transformer calculation can learn any binary non-Mercer reproducing kernel Banach space pair.
We experiment with new kernels in Transformers, and obtain results that suggest the infinite dimensionality of the standard Transformer kernel is partially responsible for its performance.
This paper's results provide a new theoretical understanding of a very important but poorly understood model in modern machine~learning.
\end{abstract}

\section{Introduction}
Since its proposal by \citet{bahdanau_neural_2015}, so-called neural attention has become the backbone of many state-of-the-art deep learning models.
This is true in particular in natural language processing (NLP), where the Transformer model of \citet{vaswani_attention_2017}
has become ubiquitous.
This ubiquity is such that much of the last few years' NLP breakthroughs have been due to developing new training regimes for Transformers \citep[etc.]{radford_language_2018,devlin_bert_2019,yang_xlnet_2019,liu_roberta_2019,wang_glue_2019,joshi_spanbert_2020,lan_albert_2020,brown_language_2020}.

Like most modern deep neural networks, theoretical understanding of the Transformer has lagged behind the rate of Transformer-based performance improvements on AI tasks like NLP.
Recently, several authors have noted Transformer operations' relationship to other, better-understood topics in deep learning theory, like the similarities between attention and convolution \citep{ramachandran_stand-alone_2019,cordonnier_relationship_2020} and the design of the residual blocks in multi-layer Transformers (e.g., \citet{lu_understanding_2019}; see also the reordering of the main learned (fully-connected or attentional) operation, elementwise nonlinearity, and normalization in the original Transformer authors' official reference codebase \citep{vaswani_tensor2tensor_2018} and in some more recent studies of deeper Transformers \citep{wang_learning_2019-1} to the ``pre-norm'' ordering of normalize, learned operation, nonlinearity, add residual ordering of modern (``v2'') Resnets \citep{he_identity_2016}).

In this paper, we propose a new lens to understand the central component of the Transformer, its ``dot-product attention'' operation.
In particular, we show dot-product attention can be characterized as a particular class of kernel method \citep{scholkopf_learning_2002}.
More specifically, it is a so-called \emph{indefinite} and \emph{asymmetric} kernel method, which refer to two separate generalizations of the classic class of kernels that does not require the classic assumptions of symmetry and positive (semi-) definiteness \citep[etc.]{ong_learning_2004,balcan_theory_2008,zhang_reproducing_2009,wu_asymmetric_2010,loosli_learning_2016,oglic_learning_2018,oglic_scalable_2019}.
We in fact show in Theorem \ref{th:approx} below that dot-product attention can learn any asymmetric indefinite kernel.

This insight has several interesting implications.
Most immediately, it provides some theoretical justification for one of the more mysterious components of the Transformer model.
It also potentially opens the door for the application of decades of classic kernel method theory towards understanding one of today's most important neural network models, perhaps similarly to how tools from digital signal processing are widely used to study convolutional neural networks.
We make a first step on this last point in this paper, proposing a generalization of prior kernel methods we call ``binary'' kernel machines, that learns how to predict distinct values for pairs of elements across two input sets, similar to an attention model.

The remainder of this paper is organized as follows.
Section \ref{sec:background} reviews the mathematical background of both Transformers and classic kernel methods.
Section \ref{sec:rkbs_def} presents the definition of kernel machines on reproducing kernel Banach spaces (RKBS's) that we use to characterize Transformers.
In particular we note that the Transformer can be described as having an infinite-dimensional feature space.
Section \ref{sec:characterization} begins our theoretical results, explicitly describing the Transformer in terms of reproducing kernels, including explicit formulations of the Transformer's kernel feature maps and its relation to prior kernels.
Section \ref{sec:learner} discusses Transformers as kernel learners, including a new representer theorem and a characterization of stochastic-gradient-descent-trained attention networks as approximate kernel learners.
In Section \ref{sec:more_kernels}, we present empirical evidence that the infinite-dimensional character of the Transformer kernel may be somewhat responsible for the model's effectiveness.
Section \ref{sec:conclusion} concludes and summarizes our work.

\section{Background and Related Work}
\label{sec:background}
\subsection{Transformer Neural Network Models}
The Transformer model \citep{vaswani_attention_2017} has become ubiquitous in many core AI applications like natural language processing.
Here, we review its core components.
Say we have two ordered sets of vectors, a set of ``source'' elements $\{\vs_1, \vs_2, \dots, \vs_{\ns}\}$, $\vs_j \in \R^{d_s}$ and a set of ``target'' elements $\{\vt_1, \vt_2, \dots, \vt_{\nt}\}$, $\vt_i \in \R^{d_t}$.
In its most general form, the neural-network ``attention'' operation that forms the backbone of the Transformer model is to compute, for each $\vt_i$, a $\vt_i$-specific embedding of the source sequence $\{\vs_j\}_{j=1}^\ns$.\footnote{
    Often, the source and target sets are taken to be the same, $\vs_i=\vt_i \; 
    \forall i$.
    This instance of attention is called self attention.}

The particular function used in the Transformer is the so-called ``scaled dot-product'' attention, which takes the form
\begin{align}
    \begin{gathered}
    \label{eq:scaleddotproduct_attn}
        a_{ij} = \frac{(\mW^Q \vt_i)^\T (\mW^K \vs_j)}{\sqrt{d}} \qquad
        \alpha_{ij} = \frac{\exp(a_{ij})}{\sum_{j=1}^{\ns} \exp(a_{ij})} \qquad
        \vt'_i = \sum_{j=1}^\ns \alpha_{ij} \mW^V \vs_j
    \end{gathered}
\end{align}
where $\mW^V, \mW^K \in \R^{d_s \times d}$, and $\mW^Q \in \R^{d_t \times d}$ are learnable weight matrices, usually called the ``value,'' ``key,'' and ``query'' weight matrices, respectively.
Usually multiple so-called ``attention heads'' with independent parameter matrices implement several parallel computations of \eqref{eq:scaleddotproduct_attn}, with the Cartesian product (vector concatenation) of several $d$-dimensional head outputs forming the final output $\vt_i'$.
Usually the unnormalized $a_{ij}$'s are called attention scores or attention logits, and the normalized $\alpha_{ij}$'s are called attention weights.

In this paper, we restrict our focus to the dot-product formulation of attention shown in \eqref{eq:scaleddotproduct_attn}.
Several other alternative forms of attention that perform roughly the same function (i.e., mapping from $\R^{d_s} \times \R^{d_t}$ to $\R$) have been proposed \citep[etc.]{bahdanau_neural_2015,luong_effective_2015,velickovic_graph_2018,battaglia_relational_2018} but the dot-product formulation of the Transformer is by far the most popular.

\subsection{Kernel Methods and Generalizations of Kernels}
Kernel methods \citep[etc.]{scholkopf_learning_2002,steinwart_support_2008}
are a classic and powerful class of machine learning methods.
The key component of kernel methods are the namesake \emph{kernel functions}, which allow the efficient mapping of input data from a low-dimensional data domain, where linear solutions to problems like classification or regression may not be possible, to a high- or infinite-dimensional embedding domain, where linear solutions can be found.

Given two nonempty sets $\X$ and $\Y$, a kernel function $\K$ is a continuous function $\K: \X \times \Y \to \R$.
In the next few sections, we will review the classic symmetric and positive (semi-) definite, or Mercer, kernels, then discuss more general forms.

\subsubsection{Symmetric and Positive Semidefinite (Mercer) kernels}
If $\X=\Y$, and for all $\x_i, \x_j \in \X=\Y$, a particular kernel $\K$ has the properties
\begin{subequations}
    \label{eq:mercer}
    \begin{gather}
        \K(\x_i, \x_j) = \K(\x_j, \x_i) \label{eq:symmetry} \\
        \vc^\T \mK \vc \geq 0 \quad \forall \; \vc \in \R^n; \quad i,j = 1, \dots, n; \quad n \in \sN \label{eq:psd}
    \end{gather}
\end{subequations}
where $\mK$ in \eqref{eq:psd} is the Gram matrix, defined as $\emK_{ij} = \K(x_i, x_j)$,
then $\K$ is said to be a Mercer kernel.
Property \eqref{eq:symmetry} is a symmetry requirement, and \eqref{eq:psd} is a condition of positive (semi-) definiteness.
For Mercer kernels, it is well-known that, among other facts, 
(i) we can define a Hilbert space of functions on $\X$, denoted $\H_\K$ (called the reproducing kernel Hilbert space, or RKHS, associated with the reproducing kernel $\K$),
(ii) $\H_\K$ has for each $\x$ a (continuous) unique element $\delta_\x$ called a point evaluation functional, with the property $f(\x) = \delta_\x(f) \; \forall f \in \H_\K$, 
(iii) $\K$ has the so-called \emph{reproducing property}, $\langle f, \K(\x, \cdot) \rangle_{\H_\K} = f(x) \; \forall f \in \H_\K$, where $\langle \cdot, \cdot \rangle_{\H_\K}$ is the inner product on $\H_\K$, and
(iv) we can define a ``feature map'' $\Phi: \X \to \F_\H$, where $\F_\H$ is another Hilbert space sometimes called the feature space, and $\K(\x, \y) = \langle \Phi(\x), \Phi(\y) \rangle_{\F_\H}$ (where $\langle \cdot, \cdot \rangle_{\F_\H}$ is the inner product associated with $\F_\H$).
This last point gives rise to the kernel trick for RKHS's.

From a machine learning and optimization perspective, kernels that are symmetric and positive (semi-) definite (PSD) are desirable because those properties guarantee that empirical-risk-minimization kernel learning problems like support vector machines (SVMs), Gaussian processes, etc. are convex.
Convexity gives appealing guarantees for the tractability of a learning problem and optimality of solutions.

\subsubsection{Learning with non-Mercer kernels}
Learning methods with non-Mercer kernels, or kernels that relax the assumptions \eqref{eq:mercer}, have been studied for some time.
One line of work \citep[etc.]{lin_study_2003,ong_learning_2004,chen_training_2008,luss_support_2008,alabdulmohsin_support_2015,loosli_learning_2016,oglic_learning_2018,oglic_scalable_2019} has focused on learning with symmetric but indefinite kernels, i.e., kernels that do not satisfy \eqref{eq:psd}.
Indefinite kernels have been identified as reproducing kernels for so-called reproducing kernel Kre\u{\i}n spaces (RKKS's) since \citet{schwartz_sous-espaces_1964} and \citet{alpay_remarks_1991}.

Replacing a Mercer kernel in a learning problem like an SVM with an indefinite kernel makes the optimization problem nonconvex in general (as the kernel Gram matrix $\mK$ is no longer always PSD).
Some early work in learning with indefinite kernels tried to ameliorate this problem by modifying the spectrum of the Gram matrix such that it again becomes PSD \citep[e.g.,][]{graepel_classification_1998,roth_optimal_2003,wu_analysis_2005}.
More recently, \citet{loosli_learning_2016,oglic_learning_2018}, among others, have proposed optimization procedures to learn in the RKKS directly.
They report better performance on some learning problems when using indefinite kernels than either popular Mercer kernels or spectrally-modified indefinite kernels, suggesting that sacrificing convexity can empirically give a performance boost.
This conclusion is of course reminiscent of the concurrent experience of deep neural networks, which are hard to optimize due to their high degree of non-convexity, yet give superior performance to many other methods.

Another line of work has explored the application of kernel methods to learning in more general Banach spaces, i.e., reproducing kernel Banach spaces (RKBS's) \citep{zhang_reproducing_2009}.
Various constructions to serve as the reproducing kernel for a Banach space (replacing the inner product of an RKHS) have been proposed, including semi-inner products \citep{zhang_reproducing_2009}, positive-definite bilinear forms via a Fourier transform construction \citep{fasshauer_solving_2015}, and others \citep[etc.]{song_reproducing_2013,georgiev_construction_2014}.
In this work, we consider RKBS's whose kernels may be neither symmetric nor PSD.
A definition of these spaces is presented next.

\section{General reproducing kernel Banach spaces}
\label{sec:rkbs_def}
Recently, \citet{georgiev_construction_2014}, \citet{lin_reproducing_2019}, and \citet{xu_generalized_2019} proposed similar definitions and constructions of RKBS's and their reproducing kernels meant to encompass prior definitions.
In this paper, we adopt a fusion of the definitions and attempt to keep the notation as simple as possible to be sufficient for our purposes.
\begin{definition}[\textbf{Reproducing kernel Banach space} ({\citealp[Definition 2.1]{xu_generalized_2019}; \citealp[Definitions 1.1 \& 1.2]{lin_reproducing_2019}; \citealp{georgiev_construction_2014}})]
    \label{def:rkbs}
    \label{def:rkbs_xuye}
    Let $\X$ and $\Y$ be nonempty sets, $\K$ a measurable function called a kernel, $\K: \X \times \Y \to \R$,
    and $\B_\X$ and $\B_\Y$ Banach spaces of real measurable functions on $\X$ and $\Y$, respectively.
Let $\Bmap{\cdot}{\cdot}: \B_\X \times \B_\Y \to \R$ be a nondegenerate bilinear mapping such that
    \begin{subequations}
        \label{eq:kernel_def}
        \begin{alignat}{2}
            &\K(x, \cdot) \in \B_\Y && \forall \; x \in \X; \label{eq:fix_x} \\
            &\Bmap{f}{\K(x, \cdot)} = f(x) \quad && \forall \; x \in \X, f \in \B_\X; \label{eq:repr1} \\
            &\K(\cdot, y) \in \B_\X && \forall \; y \in \Y; \textnormal{ and} \label{eq:fix_y} \\
            &\Bmap{\K(\cdot, y)}{g} = g(y) && \forall \; y \in \Y, g \in \B_\Y. \label{eq:repr2}
        \end{alignat}
    \end{subequations}
Then, $\B_\X$ and $\B_\Y$ are a pair of reproducing kernel Banach spaces (RKBS's) on $\X$ and $\Y$, respectively, and $\K$ is their reproducing kernel.
\end{definition}

Line \eqref{eq:fix_x} (resp. \eqref{eq:fix_y}) says that, if we take $\K$, a function of two variables $x \in \X$ and $y \in \Y$, and fix $x$ (resp. $y$), then we get a function of one variable.
This function of one variable must be an element of $\B_\Y$ (resp. $\B_\X$).
Lines \eqref{eq:repr1} and \eqref{eq:repr2} are the \emph{reproducing properties} of $\K$.

For our purposes, it will be useful to extend this definition to include a ``feature map'' characterization similar to the one used in some explanations of RKHS's \citep[Chapter 2]{scholkopf_learning_2002}.
\begin{definition}[\textbf{Feature maps for RKBS's} ({\citealp[Theorem 2.1]{lin_reproducing_2019}; \citealp{georgiev_construction_2014}})]
    \label{def:featuremaps}
    For a pair of RKBS's as defined in Definition \ref{def:rkbs}, suppose that there exist mappings $\Phi_\X: \X \to \F_\X, \Phi_\Y: \Y \to \F_\Y$, where $\F_\X$ and $\F_\Y$ are Banach spaces we will call the feature spaces, and a nondegenerate bilinear mapping $\Fmap{\cdot}{\cdot}: \F_\X \times \F_\Y \to \R$ such that
    \begin{equation}
        \K(x, y) = \Fmap{\Phi_\X(x)}{\Phi_\Y(y)} \; \forall \; x \in \X, y \in \Y.
         \label{eq:kernel}
    \end{equation}
    In this case, the spaces $\B_\X$ and $\B_\Y$ can be defined as \citep{xu_generalized_2019,lin_reproducing_2019}
\begin{subequations}
        \label{eq:banach_spaces}
        \begin{align}
            \B_\X &=
\left\{ f_v: \X \to \R: f_v(x) \triangleq \Fmap{\Phi_\X(x)}{v} \; ; v \in \F_\Y, x \in \X \right\} \label{eq:banach_spaces_x} \\
            \B_\Y &= 
\left\{ g_u: \Y \to \R: g_u(y) \triangleq \Fmap{u}{\Phi_\Y(y)} \; ; u \in \F_\X, y \in \Y \right\}. \label{eq:banach_spaces_y}
        \end{align}
    \end{subequations}
\end{definition}
\begin{remark}
    \label{rem:manifolds}
    We briefly discuss how to understand the spaces given by \eqref{eq:banach_spaces}.
    Consider \eqref{eq:banach_spaces_x} for example.
    It is a space of real-valued functions of one variable $x$, where the function is also parameterized by a $v$.
    Picking a $v \in \F_\Y$ in \eqref{eq:banach_spaces_x} defines a manifold of functions in $\B_\X$.
    This manifold of functions with fixed $v$ varies with the function $\Phi_\X$.
    Evaluating a function $f_v$ in this manifold at a point $x$ is defined by taking the bilinear product of $\Phi_\X(\x)$ and the chosen $v$.
    This also means that we can combine \eqref{eq:kernel} and \eqref{eq:banach_spaces} to say
\begin{equation}
        \K(x, y) = \Fmap{\Phi_\X(x)}{\Phi_\Y(y)} = \Bmap{f_{\Phi_\X(x)}}{g_{\Phi_\Y(y)}} \quad \text{for all } x \in \X, y \in \Y.  \label{eq:kernel_identity}
    \end{equation}
for all $x \in \X$ and $y \in \Y$.
\end{remark}

\begin{remark}
    If $\Phi_\X(x)$ and $\Phi_\Y(y)$ can be represented as countable sets of real-valued measurable functions,
    $\{\phi_\X(x)_\ell\}_{\ell \in \sN}$ and $\{\phi_\Y(y)_\ell\}_{\ell \in \sN}$ 
    for $(\phi_{\X})_\ell: \X \to \R$ and $(\phi_{\Y})_\ell: \Y \to \R$
    (i.e., $\F_\X, \F_\Y \subset \prod_{\ell \in \sN} \R$); and $\Fmap{\vu}{\vv} = \sum_{\ell \in \sN} \evu_\ell \evv_\ell$ for $\vu \in \F_\X, \vv \in \F_\Y$; then the ``feature map'' construction, whose notation we borrow from \citet{lin_reproducing_2019}, corresponds to the ``generalized Mercer kernels'' of \citet{xu_generalized_2019}.
\end{remark}

\section{Dot-Product Attention as an RKBS Kernel}
\label{sec:characterization}
We now formally state the formulation for dot-product attention as an RKBS learner.
Much like with RKHS's, for a given kernel and its associated RKBS pair, the feature maps (and also the bilinear mapping) are not unique.
In the following, we present a feature map based on classic characterizations of other kernels such as RBF kernels (e.g., \citet{steinwart_explicit_2006}).
\begin{proposition} \label{prop:rkbs}
    The (scaled) dot-product attention calculation of \eqref{eq:scaleddotproduct_attn} is a reproducing kernel for an RKBS in the sense of Definitions \ref{def:rkbs} and \ref{def:featuremaps}, with the input sets $\X$ and $\Y$ being the vector spaces from which the target elements $\{\vt_i\}_{i=1}^\nt, \vt_i \in \R^{d_t}$ and source elements $\{\vs_j\}_{j=1}^\ns, \vs_j \in \R^{d_s}$ are drawn, respectively;
    the feature maps
\begin{subequations} \label{eq:transformer_feature_maps}
        \begin{align}
            \Phi_\X(\vt) &=
            \sum_{n=0}^\infty \:
            \sum_{p_1 + \cdots + p_d = n}
\! \frac{
                \sqrt{
                    \frac{n!}{p_1! \cdots p_d!}
                }
\prod_{\ell=1}^d (\evq_\ell)^{p_\ell} 
            }{d^{\nicefrac{1}{4}}}
            \label{eq:transformer_feature_map_x}
            \\
\Phi_\Y(\vs) &=
            \sum_{n=0}^\infty \:
            \sum_{p_1 + \cdots + p_d = n}
\! \frac{
                \sqrt{ \frac{n!}{p_1! \cdots p_d!} }
\prod_{\ell=1}^d (\evk_\ell)^{p_\ell}
            }{d^{\nicefrac{1}{4}}}
            \label{eq:transformer_feature_map_y}
        \end{align}
    \end{subequations}
    where $\evq_\ell$ is the $\ell$th element of $\vq = \mW^Q \vt$, $\evk_\ell$ is the $\ell$th element of $\vk = \mW^K \vs$,
    $\mW^Q \in \R^{d \times d_t}$ 
    $\mW^K \in \R^{d \times d_s}$,
    with $d \leq d_s, d_t$ and $\rank(\mW^Q) = \rank(\mW^K) = d$;
         the bilinear mapping
         $\Fmap{\Phi_\X(\vt)}{\Phi_\Y(\vs)} = \Phi_\X(\vt) \cdot \Phi_\Y(\vs)$;
and the Banach spaces
\begin{subequations} \label{eq:transformer_banach_spaces}
        \begin{align}
\B_\X &= \left\{ f_\vk(\vt) = \exp \left( ( \mW^Q \vt)^\T \vk / \sqrt{d} \right) \; ; \; \vk \in \F_\Y, \vt \in \X  \right\}
            \label{eq:transformer_banach_x} \\ 
\B_\Y &= \left\{ g_{\vq} (\vs) = \exp \left( \vq^\T (\mW^K \vs) / \sqrt{d} \right) \; ; \; \vq \in \F_\X, \vs \in \Y  \right\}
            \label{eq:transformer_banach_y}
        \end{align}
    \end{subequations}
with the ``exponentiated query-key kernel,''
    \begin{equation}
            \K(\vt, \vs)
            = \Fmap{\Phi_\X(\vt)}{\Phi_\Y(\vs)}
            = \Bmap{f_{\Phi_\Y(\vs)}}{g_{\Phi_\X(\vt)}}
        = \exp \left( \frac{(\mW^Q \vt)^\T (\mW^K \vs)}{\sqrt{d}} \right)
        \label{eq:transformer_kernel}
    \end{equation}

the associated reproducing kernel.
\end{proposition}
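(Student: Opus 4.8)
The plan is to route the entire argument through Definition \ref{def:featuremaps} rather than verifying the four conditions of Definition \ref{def:rkbs} directly. That definition tells us that once we exhibit feature maps $\Phi_\X, \Phi_\Y$ into Banach feature spaces $\F_\X, \F_\Y$ together with a nondegenerate bilinear pairing $\Fmap{\cdot}{\cdot}$ for which \eqref{eq:kernel} holds, the spaces \eqref{eq:banach_spaces} and the reproducing properties \eqref{eq:repr1}--\eqref{eq:repr2} come for free (via \eqref{eq:kernel_identity}). So I would reduce the proposition to three checkable claims: (i) the series defining $\Phi_\X(\vt)$ and $\Phi_\Y(\vs)$ converge to elements of a Banach space; (ii) the claimed dot-product pairing is a well-defined, nondegenerate bilinear form on $\F_\X \times \F_\Y$; and (iii) applying that pairing to the two feature maps reproduces the exponentiated query--key kernel \eqref{eq:transformer_kernel}. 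Identifying the Banach spaces \eqref{eq:transformer_banach_spaces} with \eqref{eq:banach_spaces} is then just a matter of substituting feature-map images for the free parameters.

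The analytic heart is claim (iii), which I would establish by a Taylor-plus-multinomial computation. Writing $\vq = \mW^Q \vt$ and $\vk = \mW^K \vs$, expand
\[
    \exp\!\left(\frac{\vq^\T \vk}{\sqrt d}\right) = \sum_{n=0}^\infty \frac{1}{n!\, d^{n/2}} \Big(\sum_{\ell=1}^d \evq_\ell \evk_\ell\Big)^{n},
\]
and apply the multinomial theorem to the inner power, grouping terms by multi-indices $(p_1,\dots,p_d)$ with $p_1 + \cdots + p_d = n$. Each resulting coefficient factors into a part depending only on $\vq$ and a matching part depending only on $\vk$; collecting these over all $(n,p)$ shows the sum equals the termwise product of the components of $\Phi_\X(\vt)$ and $\Phi_\Y(\vs)$, i.e.\ $\Fmap{\Phi_\X(\vt)}{\Phi_\Y(\vs)}$ under the dot-product pairing. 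This step is essentially bookkeeping: the square-root factors in \eqref{eq:transformer_feature_maps} are chosen precisely so that the product of the $(n,p)$-indexed components reproduces the coefficient $\tfrac{1}{d^{n/2}\, p_1! \cdots p_d!}\prod_\ell \evq_\ell^{p_\ell} \evk_\ell^{p_\ell}$ in the expansion.

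For claims (i) and (ii) I would take $\F_\X$ and $\F_\Y$ to be $\ell^2$ over the countable multi-index set $\{(n,p_1,\dots,p_d)\}$, with $\Fmap{\cdot}{\cdot}$ the associated inner product. Convergence in (i) follows from computing the squared norm of $\Phi_\X(\vt)$, which telescopes back through the same multinomial identity to $\exp(\lVert \vq \rVert^2 / \sqrt d) < \infty$ (and likewise for $\Phi_\Y$); hence both feature maps land in $\ell^2$. Given this, the pairing in (ii) converges absolutely by Cauchy--Schwarz for every pair of feature vectors and is nondegenerate because it is a genuine $\ell^2$ inner product. The rank hypotheses $\rank(\mW^Q) = \rank(\mW^K) = d$ enter here: they ensure the query and key maps use all $d$ coordinates, so that the induced maps $v \mapsto f_v$ and $u \mapsto g_u$ separate points and the bilinear form descended onto $\B_\X \times \B_\Y$ stays nondegenerate, as Definition \ref{def:rkbs} demands.

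I expect the main obstacle to be not the algebraic identity but the functional-analytic bookkeeping that makes Definition \ref{def:featuremaps} genuinely applicable: verifying that $\B_\X$ and $\B_\Y$ defined by \eqref{eq:banach_spaces} are honest Banach spaces of functions and that the \emph{induced} pairing $\Bmap{\cdot}{\cdot}$ is nondegenerate. Because $\F_\X$ and $\F_\Y$ coincide (both $\ell^2$) while the two feature maps differ only through $\mW^Q$ versus $\mW^K$, the resulting kernel is asymmetric and generally indefinite even in the self-attention case $\X = \Y$, so care is needed to confirm that the construction lands in the genuinely Banach (rather than Hilbert) regime and that nondegeneracy of the $\B$-pairing---not merely the $\F$-pairing---holds. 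Once that is secured, the reproducing properties are immediate from \eqref{eq:kernel_identity}, and substituting $v = \Phi_\Y(\vs)$ and $u = \Phi_\X(\vt)$ into \eqref{eq:banach_spaces} recovers the exponential functions of \eqref{eq:transformer_banach_spaces}, completing the identification of $\K$ as the reproducing kernel.
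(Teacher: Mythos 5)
Your proposal takes essentially the same route as the paper: the paper's proof is precisely the Taylor-expansion-plus-multinomial-theorem verification of \eqref{eq:transformer_kernel} by multiplying the two series in \eqref{eq:transformer_feature_maps} (your claim (iii)), and its handling of nondegeneracy is the same rank-based density argument you invoke, namely that $\rank(\mW^Q)=\rank(\mW^K)=d$ forces the closed span of the feature-map images to be all of $\F_\X$ and $\F_\Y$. Your additional $\ell^2$ bookkeeping --- convergence of the feature series via the squared-norm identity and Cauchy--Schwarz for well-definedness of the pairing --- simply fills in details the paper leaves implicit under ``straightforward,'' rather than constituting a different approach.
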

The proof of Proposition \ref{prop:rkbs} is straightforward and involves verifying \eqref{eq:transformer_kernel} by multiplying the two infinite series in \eqref{eq:transformer_feature_maps}, then using the multinomial theorem and the Taylor expansion of the exponential.

In the above and when referring to Transformer-type models in particular rather than RKBS's in general, we use $\vt, \vs, \vq$, and $\vk$ for $x, y, u$, and $v$, respectively, to draw the connection between the elements of the RKBS's and the widely-used terms ``target,'' ``source,'' ``query,'' and ``key.''

The rank requirements on $\mW^Q$ and $\mW^K$ mean that $\cls(\{\Phi_\X(\vt), \vt \in \X\}) = \F_\X$ and $\cls(\{\Phi_\Y(\vs), \vs \in \Y\}) = \F_\Y$.
This in turn means that the bilinear mapping is nondegenerate.

\begin{remark}
    Now that we have an example of a pair of RKBS's, we can make more concrete some of the discussion from Remark \ref{rem:manifolds}.
    Examining \eqref{eq:transformer_banach_x}, for example, we see that when we select a $\vk \in \F_\Y$, we define a manifold of functions in $\B_\X$ where $\vk$ is fixed, but $\mW^Q$ can vary.
    Similarly, selecting a $\vq \in \F_\X$ defines a manifold in $\B_\Y$.
    Selecting an element from both $\F_\X$ and $\F_\Y$ locks us into one element each from $\B_\X$ and $\B_\Y$, which leads to the equality in \eqref{eq:kernel_identity}.
\end{remark}

\begin{remark}
    Examining \eqref{eq:transformer_banach_spaces}-\eqref{eq:transformer_kernel}, we can see that the element drawn from $\F_\Y$ that parameterizes the element of $\B_\X$, as shown in \eqref{eq:transformer_banach_x}, is a function of $\Phi_\Y$ (and vice-versa for \eqref{eq:transformer_banach_y}). This reveals the exact mechanism in which the Transformer-type attention computation is a generalization of the RKBS's considered by \citet{fasshauer_solving_2015}, \citet{lin_reproducing_2019}, \citet{xu_generalized_2019}, etc., for applications like SVMs, where one of these function spaces is considered fixed.
\end{remark}

\begin{remark}
    Since the feature maps define the Banach spaces \eqref{eq:banach_spaces}, the fact that the parameters $\mW^Q$ and $\mW^K$ are learned implies that Transformers learn parametric representations of the RKBS's themselves.
    This is in contrast to classic kernel methods, where the kernel (and thus the reproducing space) is usually fixed.
    In fact, in Theorem \ref{th:approx} below, we show that (a variant of) the Transformer architecture can approximate any RKBS mapping.
\end{remark}

\begin{remark}
    The symmetric version of the exponentiated dot product kernel is known to be a reproducing kernel for the so-called Bargmann space \citep{bargmann_hilbert_1961} which arises in quantum mechanics.
\end{remark}

\begin{remark}
Notable in Proposition \ref{prop:rkbs} is that we define the kernel of dot-product attention as including the exponential of the softmax operation.
The output of this operation is therefore not the attention scores $a_{ij}$ but rather the unnormalized attention weights, $\bar{\alpha}_{ij} = \alpha_{ij} \sum_j \alpha_{ij}$.
Considering the exponential as a part of the kernel operation reveals that the feature spaces for the Transformer are in fact infinite-dimensional in the same sense that the RBF kernel is said to have an infinite-dimensional feature space.
In Section \ref{sec:more_kernels}, we find empirical evidence that this infinite dimensionality may be partially responsible for the Transformer's effectiveness.
\end{remark}

\section{Transformers as kernel learners}
\label{sec:learner}
\subsection{The binary RKBS learning problem and its representer theorem}
Most kernel learning problems take the form of empirical risk minimization problems.
For example, if we had a learning problem for a finite dataset $(x_1, z_1), \dots, (x_n, z_n), x_i \in \X, z_i \in \R$ and wanted to learn a function $f: \X \to \R$ in an RKHS $\H_\K$, the learning problem might be written as
\begin{equation}
    f^* = \argmin_{f \in \H_\K} \frac{1}{n} \sum_{i=1}^n L \left(x_i, z_i, f(x_i)\right) + \lambda R(\|f\|_{\H_\K}) \label{eq:rkhs_learner}
\end{equation}
where $L: \X \times \R \times \R \to \R$ is a convex loss function, $R: [0, \infty) \to \R$ is a strictly increasing regularization function, and $\lambda$ is a scaling constant.
Recent references that consider learning in RKBS's \citep{georgiev_construction_2014,fasshauer_solving_2015,lin_reproducing_2019,xu_generalized_2019} consider similar problems to \eqref{eq:rkhs_learner}, but with the RKHS $\H$ replaced with an RKBS.

The kernel learning problem for attention, however, is different from \eqref{eq:rkhs_learner} in that, as we discussed in the previous section, we need to predict a response $z_{ij}$ (i.e., the attention logit) for every pair $(\vt_i, \vs_j)$.
This motivates a generalization of the classic class of kernel learning problems that operates on pairs of input spaces.
We discuss this generalization now.

\begin{definition}[Binary kernel learning problem - regularized empirical risk minimization]
    \label{def:binary_kernel}
    Let $\X$ and $\Y$ be nonempty sets, and $\B_\X$ and $\B_\Y$ RKBS's on $\X$ and $\Y$, respectively.
    Let $\Bmap{\cdot}{\cdot}: \B_\X \times \B_\Y \to \R$ be a bilinear mapping on the two RKBS's.
    Let $\Phi_\X: \X \to \F_\X$ and $\Phi_\Y: \Y \to \F_\Y$ be fixed feature mappings with the property that $\Fmap{\Phi_\X(x_i)}{\Phi_\Y(y_i)} = \Bmap{f_{\Phi_\Y(y)}}{g_{\Phi_\X(x)}}$.
    Say $\{x_1, \dots, x_{n_x}\}, x_i \in \X$, $\{y_1, \dots, y_{n_y}\}, y_j \in \Y$, and $\{z_{ij}\}_{i=1,\dots,n_x; \; j=1,\dots,n_y}$, $z_{ij} \in \R$ is a finite dataset where a response $z_{ij}$ is defined for every $(i,j)$ pair of an $x_i$ and a $y_j$.
    Let $L: \X \times \Y \times \R \times \R \to \R$ be a loss function that is convex for fixed $(x_i, y_j, z_{i,j})$, and $R_\X: [0, \infty) \to \R$ and $R_\Y: [0, \infty) \to \R$ be convex, strictly increasing regularization functions.
    
    A \emph{binary empirical risk minimization kernel learning problem} for learning on a pair of RKBS's takes the form
\begin{align}
        \begin{split}
            f^\ast, g^\ast = \argmin_{f \in \B_\X, g \in \B_\Y}
            \Bigg[
                &\frac{1}{n_x n_y} \sum_{i,j}
                L \left(x_i, y_j, \z_{ij},
\Bmap{f_{\Phi_\Y(y_j)}}{g_{\Phi_\X(x_i)}}
                \right) \\
            &+ \lambda_\X  R_\X(\| f \|_{\B_\X})
            + \lambda_\Y  R_\Y(\| g \|_{\B_\Y})
            \Bigg]
            \label{eq:kernel_problem}
        \end{split}
    \end{align}
where $\lambda_\X$ and $\lambda_\Y$ are again scaling constants.
\end{definition}

\begin{remark}
    The idea of a binary kernel problem that operates over pairs of two sets is not wholly new: there is prior work both in the collaborative filtering \citep{abernethy_new_2009} and tensor kernel method \citep{tao_supervised_2005,kotsia_support_2011,he_kernelized_2017} literatures.
    Our problem and results are new in the generalization to Banach rather than Hilbert spaces: as prior work in the RKBS literature \citep[etc.]{micchelli_function_2004,zhang_regularized_2012,xu_generalized_2019} notes, RKBS learning problems are distinct from RKHS ones in their additional nonlinearity and/or nonconvexity.
    An extension of binary learning problems to Banach spaces is thus motivated by the Transformer setting, where a kernel method is in a context of a nonlinear and nonconvex deep neural network, rather than as a shallow learner like an SVM or matrix completion.
    For more discussion, see Appendix \ref{app:why_banach}.
\end{remark}

Virtually all classic kernel learning methods find solutions whose forms are specified by so-called \emph{representer theorems}.
Representer theorems state that the solution to a regularized empirical risk minimization problem over a reproducing kernel space can be expressed as a linear combination of evaluations of the reproducing kernel against the dataset.
Classic solutions to kernel learning problems thus reduce to finding the coefficients of this linear combination.
Representer theorems exist in the literature for RKHS's \citep{kimeldorf_results_1971,scholkopf_generalized_2001,argyriou_when_2009}, RKKS's \citep{ong_learning_2004,oglic_learning_2018}, and RKBS's \citep{zhang_reproducing_2009,zhang_regularized_2012,song_reproducing_2013,fasshauer_solving_2015,xu_generalized_2019,lin_reproducing_2019}.

\citet[Theorem 3.2]{fasshauer_solving_2015}, \citet[Theorem 2.23]{xu_generalized_2019}, and \citet[Theorem 4.7]{lin_reproducing_2019} provide representer theorems for RKBS learning problems.
However, their theorems only deal with learning problems where datapoints come from only one of the sets on which the reproducing kernel is defined (i.e., only $\X$ but not $\Y$), which means the solution sought is an element of only one of the Banach spaces (e.g., $f: \X \to \R, f \in \B_\X$).
Here, we state and prove a theorem for the more-relevant-to-Transformers binary case presented in Definition \ref{def:binary_kernel}.

\begin{theorem}
    \label{thm:representer}
    Suppose we have a kernel learning problem of the form in \eqref{eq:kernel_problem}.
Let $\K: \X \times \Y \to \R$ be the reproducing kernel of the pair of RKBS's $\B_\X$ and $\B_\Y$
satisfying Definitions \ref{def:rkbs} and \ref{def:featuremaps}.
    Then, given some conditions on $\B_\X$ and $\B_\Y$ (see Appendix \ref{app:representer}), the regularized empirical risk minimization problem \eqref{eq:kernel_problem}
    has a unique solution pair $(f^*, g^*)$, with the property that
\begin{align}
        \iota(f^\ast) = \sum_{i=1}^{n_x} \xi_i \K(x_i, \cdot); \qquad
        \iota(g^\ast) = \sum_{j=1}^{n_y} \zeta_j \K(\cdot, y_j). \label{eq:representer}
    \end{align}
    where $\iota(f)$ (resp. $\iota(g)$) denotes the G\^ateaux derivative of the norm of $f$ (resp. $g$) with the convention that $\iota(0) \triangleq 0$, and 
    where $\xi_i,\zeta_j \in \R$.
\end{theorem}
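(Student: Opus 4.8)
The plan is to derive the representer form \eqref{eq:representer} directly from the first-order (variational) optimality conditions of \eqref{eq:kernel_problem}, in the spirit of the one-sided RKBS representer theorems of \citet{fasshauer_solving_2015,xu_generalized_2019,lin_reproducing_2019}, but carrying the argument out separately in each of the two coupled variables $f$ and $g$. Existence and uniqueness of the minimizing pair is not the novel content, and I would dispatch it first using the hypotheses collected in Appendix~\ref{app:representer}: taking $\B_\X,\B_\Y$ reflexive with Gâteaux-differentiable (smooth) and uniformly convex norms makes each regularizer $R_\X(\|\cdot\|_{\B_\X})$ and $R_\Y(\|\cdot\|_{\B_\Y})$ strictly convex, coercive, and continuously differentiable, which secures a minimizer. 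I would then isolate what is genuinely new — the kernel expansion — which follows purely from stationarity and therefore holds at the optimum irrespective of the fact that the coupling is only separately (not jointly) convex.

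The crucial structural observation is that the fourth argument of $L$ depends on the learned functions only through their point evaluations. Unwinding \eqref{eq:banach_spaces} together with \eqref{eq:kernel_identity} and the reproducing properties \eqref{eq:repr1}--\eqref{eq:repr2}, the prediction for a pair $(x_i,y_j)$ is assembled from $f(x_i)=\Bmap{f}{\K(x_i,\cdot)}$ and $g(y_j)=\Bmap{\K(\cdot,y_j)}{g}$, i.e.\ from pairings of the learned $f\in\B_\X$ and $g\in\B_\Y$ against the fixed kernel sections $\K(x_i,\cdot)\in\B_\Y$ and $\K(\cdot,y_j)\in\B_\X$, whose memberships are exactly \eqref{eq:fix_x} and \eqref{eq:fix_y}. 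Writing this prediction as $P_{ij}(f,g)$, the point is that $\partial P_{ij}/\partial f$ sees $f$ only through $f(x_i)$ and $\partial P_{ij}/\partial g$ only through $g(y_j)$.

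Next I would fix $g=g^\ast$ and differentiate the objective in $f$. For a direction $h\in\B_\X$, the chain rule gives a loss contribution $\tfrac{1}{n_x n_y}\sum_{i,j}\partial_4 L(x_i,y_j,z_{ij},P_{ij})\,c_{ij}\,\Bmap{h}{\K(x_i,\cdot)}$, where $c_{ij}$ collects the scalar $\partial P_{ij}/\partial(f(x_i))$ (a number depending on $g^\ast(y_j)$), using that the Gâteaux derivative of $f\mapsto f(x_i)=\Bmap{f}{\K(x_i,\cdot)}$ in direction $h$ is simply $\Bmap{h}{\K(x_i,\cdot)}$. The regularizer contributes $\lambda_\X R_\X'(\|f\|_{\B_\X})\,\Bmap{h}{\iota(f)}$, since by definition $\iota(f)$ is the Gâteaux derivative of the norm. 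Setting the total derivative to zero for every $h$ and invoking nondegeneracy of $\Bmap{\cdot}{\cdot}$ yields, in $\B_\X^\ast$,
\[
\lambda_\X R_\X'(\|f^\ast\|_{\B_\X})\,\iota(f^\ast)
\;=\; -\frac{1}{n_x n_y}\sum_{i}\Big(\sum_{j}\partial_4 L_{ij}\,c_{ij}\Big)\K(x_i,\cdot).
\]
Because the inner sum over $j$ is a single real number, dividing through — the strict monotonicity of $R_\X$ gives $R_\X'>0$ when $f^\ast\neq0$, while the convention $\iota(0)\triangleq0$ covers $f^\ast=0$ — collapses the right-hand side to $\sum_i \xi_i\,\K(x_i,\cdot)$ with $\xi_i\in\R$, which is the first half of \eqref{eq:representer}. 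The identical computation with the roles of the two spaces swapped — now holding $f=f^\ast$, differentiating in $g$, and absorbing the sum over $i$ into the coefficients — gives $\iota(g^\ast)=\sum_j\zeta_j\,\K(\cdot,y_j)$.

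The main obstacle, and the place where the Banach (rather than Hilbert) setting bites, is the norm derivative $\iota$: unlike an RKHS there is no Riesz map identifying $\B_\X$ with $\B_\X^\ast$, so I must guarantee that $\|\cdot\|_{\B_\X}$ is Gâteaux differentiable away from the origin and that $\iota(f)$ is the unique norming functional — precisely the smoothness and convexity hypotheses deferred to Appendix~\ref{app:representer}. A secondary subtlety is that the $f$--$g$ coupling through $P_{ij}$ is bilinear, so the joint problem need not be convex; I would therefore state \eqref{eq:representer} as a consequence of stationarity (valid at the unique optimum guaranteed by the Appendix conditions) rather than of global convexity, and verify the embeddings $\K(x_i,\cdot)\in\B_\Y\hookrightarrow\B_\X^\ast$ and $\K(\cdot,y_j)\in\B_\X\hookrightarrow\B_\Y^\ast$ that let the two duality-map equations be read back as elements of the opposite RKBS.
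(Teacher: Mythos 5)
Your route is genuinely different from the paper's. The paper never differentiates the objective: it first proves a standalone representer lemma for \emph{minimum-norm interpolation} (Lemma \ref{lem:min_norm}), where existence comes from linear independence of the kernel sections plus nondegeneracy of the pairing (Lemma \ref{lem:min_norm_existence}), uniqueness from a Chebyshev-set argument in the reflexive, strictly convex direct sum $\B_\X \oplus \B_\Y$, and the expansions of $\iota(f^\ast),\iota(g^\ast)$ from Birkhoff--James orthogonality of the optimum to the null space $\Nxyzero$, the orthogonality characterization of Lemma \ref{lem:rkbs_ortho}, and the annihilator identity $\left( {}^\perp \spn\{ \K(x_i, \cdot)\} \right)^\perp = \cls\{ \K(x_i, \cdot)\}$. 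It then reduces the ERM problem \eqref{eq:kernel_problem} to that lemma: for any pair $(f,g)$, the minimum-norm interpolant of the values $(f,g)$ produces on the data grid attains the same loss with no larger norms, hence no larger objective, so the (separately established) unique minimizer must itself carry the form \eqref{eq:representer}. You instead extract \eqref{eq:representer} from first-order optimality, fixing one variable and differentiating in the other --- an Euler--Lagrange argument, with the genuinely useful observation that each partial problem is convex even though the joint problem is only biconvex, so $0$ in the partial (sub)differential is a legitimate necessary condition at the optimum.

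The genuine gap is that your chain-rule computation requires $\partial_4 L$ and $R_\X', R_\Y'$ to exist, but Definition \ref{def:binary_kernel} assumes only that $L$ is convex in its fourth argument and that $R_\X, R_\Y$ are convex and strictly increasing. Hinge-type losses --- precisely the losses for which representer theorems are classically invoked --- are convex but not differentiable, and nothing in the hypotheses excludes them; the paper's interpolation-reduction proof needs smoothness only of the Banach-space \emph{norms} (to define $\iota$), never of $L$ or $R$. To salvage your argument under the stated hypotheses you must replace every derivative by a subdifferential: the optimality condition becomes $0 \in \partial_f \gT(f^\ast, g^\ast)$, and you then need a sum and chain rule for convex subdifferentials in Banach spaces (Moreau--Rockafellar, with its qualification conditions) to separate the loss term from the regularizer and to write $\partial\big(R_\X \circ \|\cdot\|_{\B_\X}\big)(f^\ast) = \big\{ r\,\iota(f^\ast) : r \in \partial R_\X(\|f^\ast\|_{\B_\X}) \big\}$ for $f^\ast \neq 0$; this is patchable but is a missing layer of machinery, not a cosmetic omission. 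Two smaller inaccuracies: the conditions actually imposed in Appendix \ref{app:representer} are reflexivity, strict convexity, and smoothness --- not uniform convexity, which is strictly stronger --- and those conditions do \emph{not} make $R_\X(\|\cdot\|_{\B_\X})$ strictly convex or continuously differentiable (take $R_\X(t)=t$: the composite is positively homogeneous, so it fails strict convexity along rays), so your existence/uniqueness sketch cannot lean on those properties and would need to follow something like the paper's midpoint argument plus Lemma \ref{lem:convex_existence} instead.
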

\begin{proof}
    See Appendix \ref{app:representer}.
\end{proof}

\subsection{A new approximate kernel learning problem and universal approximation theorem}
\label{sec:approx}
The downside of finding solutions to kernel learning problems like \eqref{eq:rkhs_learner} or \eqref{eq:kernel_problem} of the form \eqref{eq:representer} as suggested by representer theorems is that they scale poorly to large datasets.
It is well-known that for an RKHS learning problem, finding the scalar coefficients by which to multiply the kernel evaluations takes time cubic in the size of the dataset, and querying the model takes linear time.
The most popular class of approximation techniques are based on the so-called Nystr\"{o}m method, which constructs a low-rank approximation of the kernel Gram matrix and solves the problem generated by this approximation \citep{williams_using_2001}.
A recent line of work \citep{gisbrecht_metric_2015,schleif_indefinite_2017,oglic_scalable_2019} has extended the Nystr\"{o}m method to RKKS learning.

In this section, we characterize the Transformer learning problem as a new class of approximate kernel methods -- a ``distillation'' approach, one might call it.
We formally state this idea now.
\begin{proposition}[Parametric approximate solutions of binary kernel learning problems]
    Consider the setup of a binary kernel learning problem from Definition \ref{def:binary_kernel}.
    We want to find approximations to the solution pair $(f^*, g^*)$.
    In particular, we will say we want an approximation $\hat \K: \X \times \Y \to \R$ such that
    \begin{equation}
\hat \K (x, y)
        \approx \Bmap{f^*_{\Phi_\Y(y)}}{g^*_{\Phi_\X(x)}}.
         \label{eq:kernel_approx_prob}
    \end{equation}
    for all $x \in \X$ and $y \in \Y$.

    Comparing \eqref{eq:kernel_approx_prob} to \eqref{eq:kernel_identity} suggests a solution: to learn a function $\hat \K$ that approximates $\K$.
    In particular, \eqref{eq:kernel_identity} suggests learning explicit approximations of the feature maps, i.e.,
    \begin{equation*}
        \hat \K (x, y) \approx \Fmap{\Phi_\X(x)}{\Phi_\Y(y)}.
    \end{equation*}
    In fact, it turns out that the Transformer query-key mapping \eqref{eq:scaleddotproduct_attn} does exactly this.
    That is, while the Transformer kernel calculation outlined in Propostion \ref{prop:rkbs} is finite-dimensional, it can in fact approximate the potentially infinite-dimensional optimal solution $(f^\ast, g^\ast)$ characterized in Theorem~\ref{thm:representer}.
This fact is proved next.
\end{proposition}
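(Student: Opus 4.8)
The plan is to show that the attention logit computed by a Transformer block---the dot product $\vq^\T \vk / \sqrt{d}$ of query and key vectors, where now $\vq$ and $\vk$ are produced from $\vt$ and $\vs$ by learnable $\MLP$'s rather than by the fixed linear maps $\mW^Q, \mW^K$ of Proposition \ref{prop:rkbs}---can approximate an arbitrary reproducing kernel $\K$ uniformly on compact domains. The guiding identity is \eqref{eq:kernel_identity}: since $\K(\vt,\vs) = \Fmap{\Phi_\X(\vt)}{\Phi_\Y(\vs)}$ is a pairing of feature vectors, it suffices to produce finite-dimensional vectors $\vq(\vt)$ and $\vk(\vs)$ whose dot product matches this (possibly infinite-dimensional) pairing to within $\eps$. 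This is exactly where the ``finite-dimensional computation approximating an infinite-dimensional solution'' claim is realized: the feature maps of Proposition \ref{prop:rkbs} and the optimal $(f^\ast,g^\ast)$ of Theorem \ref{thm:representer} live in an infinite-dimensional space, but only a finite-rank surrogate is needed to achieve uniform $\eps$-accuracy.

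I would carry this out in three steps. First, restricting $\X$ and $\Y$ to compact sets and taking $\K$ continuous, I would invoke the Stone--Weierstrass theorem on the family of functions of the form $\sum_{\ell=1}^{d} a_\ell(\vt)\, b_\ell(\vs)$ with $a_\ell \in C(\X)$, $b_\ell \in C(\Y)$: this family is a subalgebra of $C(\X \times \Y)$ that contains the constants and separates points, hence is dense, so there exist a finite $d$ and continuous factors with $\supnorm{\K - \sum_{\ell=1}^{d} a_\ell b_\ell} < \eps/2$. Second, I would apply the classical neural-network universal approximation theorem to approximate each scalar factor $a_\ell$ by an $\MLP$ output coordinate $(\MLP^Q)_\ell$ and each $b_\ell$ by $(\MLP^K)_\ell$, uniformly on the compact domains, to within a per-coordinate tolerance $\delta$ to be fixed later. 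Third, I would assemble these coordinatewise into the query vector $\vq(\vt) = ((\MLP^Q)_1(\vt), \dots, (\MLP^Q)_d(\vt))$ and key vector $\vk(\vs) = ((\MLP^K)_1(\vs), \dots, (\MLP^K)_d(\vs))$, absorbing the $\sqrt{d}$ scaling into the weights, so that the attention logit $\vq(\vt)^\T \vk(\vs)$ equals $\sum_{\ell=1}^d (\MLP^Q)_\ell(\vt)\,(\MLP^K)_\ell(\vs)$, which by construction is close to $\sum_\ell a_\ell b_\ell$ and hence to $\K$.

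The delicate part is the error bookkeeping, which dictates the order of the quantifiers. Once $d$ and the factors $a_\ell, b_\ell$ are fixed in the first step, these factors are continuous on compact sets and hence uniformly bounded, say by $M$; the per-coordinate product error then obeys $|a_\ell b_\ell - (\MLP^Q)_\ell (\MLP^K)_\ell| \le M\delta + (M+\delta)\delta$ by the standard product estimate, and summing over the $d$ fixed coordinates yields a total of order $d\,M\,\delta$, which I would drive below $\eps/2$ by choosing $\delta$ small \emph{after} $d$ and $M$ are known. Combined with the Stone--Weierstrass error this gives $\supnorm{\hat\K - \K} < \eps$. One remaining wrinkle is whether the target is the bare logit (the genuinely asymmetric, indefinite feature-space pairing $\Fmap{\cdot}{\cdot}$, which may take negative values and so is the right object for a non-Mercer kernel) or the exponentiated kernel of \eqref{eq:transformer_kernel}; in the former case the construction above applies directly, while in the latter I would note that $\exp$ is uniformly continuous on the bounded range of the logit, so an $\eps'$-accurate logit yields an $O(\eps')$-accurate kernel. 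I expect this accumulation-and-ordering step to be the main obstacle, since it is where the finite width $d$ must be reconciled against both the truncation of the infinite-dimensional feature map and the compounding of the $d$ independent $\MLP$ approximations.
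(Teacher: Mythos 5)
Your proposal is correct and follows essentially the same route as the paper's proof: a Stone--Weierstrass factorization of the continuous kernel into a finite sum $\sum_{\ell=1}^{d}\phi_\ell(\vt)\psi_\ell(\vs)$, followed by the classical universal approximation theorem applied to each factor, followed by the same quantifier ordering (fix $d$ and the uniform bound first, then shrink the per-coordinate tolerance) with an equivalent product-error estimate (you do it coordinatewise; the paper does it vectorwise via Cauchy--Schwarz). Your closing remark on composing with $\exp$/$\log$ also matches how the paper connects the logit-level approximation to the exponentiated kernel of \eqref{eq:transformer_kernel}.
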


\begin{theorem}
    \label{th:approx}
    Let $\X \subset \R^{d_t}$ and $\Y \subset \R^{d_s}$ be compact; $\vt \in \X, \vs \in \Y$; and let $q_\ell: \X \to \R$ and $k_\ell: \Y~\to~\R$ for $\ell=1, \dots, \dh$ be two-layer neural networks with $m$ hidden units.
    Then, for any continuous function $F: \X \times \Y \to \R$ and $\epsilon > 0$, there are integers $m, \dh > 0$ such that
    \begin{equation}
        \left| F(\vt, \vs) - \sum_{\ell=1}^\dh q_\ell(\vt) k_\ell(\vs) \right| < \epsilon \quad
        \forall \; \vt \in \X, \vs \in \Y. \label{eq:kernel_approx}
    \end{equation}
\end{theorem}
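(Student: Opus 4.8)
The plan is to split the approximation into two stages: first approximate $F$ by a finite sum of products of \emph{continuous} factors, one depending only on $\vt$ and one only on $\vs$, and then replace each continuous factor with a two-layer neural network while controlling the error introduced by the products.

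For the first stage I would invoke the Stone--Weierstrass theorem. The collection of finite sums $\sum_\ell a_\ell(\vt)\, b_\ell(\vs)$ with $a_\ell \in C(\X)$ and $b_\ell \in C(\Y)$ is exactly the algebraic tensor product $C(\X) \otimes C(\Y)$, which forms a subalgebra of $C(\X \times \Y)$ (the product of two such sums is again a finite sum of products). Since $\X$ and $\Y$ are compact, so is $\X \times \Y$. This subalgebra contains the constant functions (take $a_\ell \equiv b_\ell \equiv 1$) and separates points: given distinct $(\vt_1,\vs_1)$ and $(\vt_2,\vs_2)$, at least one coordinate differs, and a factor depending only on that coordinate distinguishes them. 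Stone--Weierstrass therefore guarantees density in the sup norm, so there exist an integer $\dh$ and continuous $\tilde q_\ell \in C(\X)$, $\tilde k_\ell \in C(\Y)$ with $\supnorm{F - \sum_{\ell=1}^{\dh} \tilde q_\ell\, \tilde k_\ell} < \epsilon/2$.

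For the second stage I would apply the classical universal approximation theorem for two-layer networks on a compact set to each continuous factor. The delicate point---the part I expect to be the main obstacle---is propagating the approximation error through the products and the sum, rather than the approximation of any single factor. Setting $M = \max_\ell \{\supnorm{\tilde q_\ell}, \supnorm{\tilde k_\ell}\}$ (finite since each factor is continuous on a compactum), and requiring networks $q_\ell, k_\ell$ with $\supnorm{q_\ell - \tilde q_\ell} < \delta$ and $\supnorm{k_\ell - \tilde k_\ell} < \delta$, a triangle-inequality estimate gives $\supnorm{q_\ell k_\ell - \tilde q_\ell \tilde k_\ell} \le (2M + \delta)\delta$ for each $\ell$, hence total perturbation at most $\dh(2M+\delta)\delta$. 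Because $\dh$ and $M$ are now fixed, choosing $\delta$ small enough forces this below $\epsilon/2$.

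Finally I would assemble the estimate. Taking $m$ to be the largest hidden-unit count needed among the $2\dh$ networks---and padding any smaller network with zero-weight units so that it uses exactly $m$---yields a single common $m$. Combining the two stages by the triangle inequality gives $\supnorm{F - \sum_{\ell=1}^{\dh} q_\ell k_\ell} < \epsilon$, which is the claim. All content beyond the product-error bookkeeping reduces to the two cited classical theorems.
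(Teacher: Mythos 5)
Your proposal is correct and follows essentially the same route as the paper's proof: Stone--Weierstrass applied to the subalgebra of finite sums $\sum_\ell a_\ell(\vt)\,b_\ell(\vs)$ to get a continuous separable approximation, followed by the classical universal approximation theorem for each factor, with a triangle-inequality argument to propagate the factor errors through the products. Your per-term bound $(2M+\delta)\delta$ is just an elementwise version of the paper's vectorized Cauchy--Schwarz estimate, and your explicit handling of a common hidden-unit count $m$ by zero-padding is a detail the paper leaves implicit.
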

\begin{proof}
    See Appendix \ref{app:approx}.
\end{proof}
We now outline how Theorem \ref{th:approx} relates to Transformers.
If we concatenate the outputs of the two-layer neural networks $\{q_\ell\}_{\ell=1}^d$ and $\{k_\ell\}_{\ell=1}^d$ into $d$-dimensional vectors $\vq: \R^{d_t} \to \R^d$ and $\vk: \R^{d_s} \to \R^d$, then the dot product $\vq(\vt)^\T \vk(\vs)$ denoted by the sum in \eqref{eq:kernel_approx} can approximate any real-valued continuous function on $\X \times \Y$.
Minus the usual caveats in applications of universal approximation theorems (i.e., in practice the output elements share hidden units rather than having independent ones), this dot product is exactly the computation of the attention logits $a_{ij}$, i.e., $F(\vt, \vs) \approx \log \K(\vt, \vs)$ for the $F$ in \eqref{eq:kernel_approx} and the $\K$ in \eqref{eq:transformer_kernel} up to a scaling constant $\sqrt{d}$.

Since the exponential mapping between the attention logits and the exponentiated query-key kernel used in Transformers is a one-to-one mapping, if we take $F(\vt, \vs) = \log \Bmapbig{f^\ast_{\Phi_\Y(\vs)}}{g^\ast_{\Phi_\X(\vt)}}$, then we can use a Transformer's dot-product attention to approximate the optimal solution to any RKBS solution arbitrarily well.

The core idea of an attention-based deep neural network is then to learn parametric representations of $q_\ell$ and $k_\ell$ via stochastic gradient descent.
Unlike traditional representer-theorem-based learned functions, training time of attention-based kernel machines like deep Transformers (generally, but with no guarantees) scale sub-cubically with dataset size, and evaluation time stays constant regardless of dataset size.

\section{Is the exponentiated dot product essential to Transformers?}
\label{sec:more_kernels}
\ctable[
    caption = Test BLEU scores for Transformers with various kernels on machine translation (case-sensitive sacreBLEU). Values are mean $\pm$ std. dev over 5 training runs with different random seeds.,
    label = tab:more_kernels,
    pos = htb,
    width = \textwidth,
    star
]{l X X X X X}{
    \tnote[]{EDP = Exponentiated dot product; EI = Exponentiated intersection kernel}
}{
    \toprule
& \multicolumn{1}{c}{EDP} & 
    \multicolumn{1}{c}{RBF} &  
    \multicolumn{1}{c}{L2 Distance} & 
    \multicolumn{1}{c}{EI} & 
    \multicolumn{1}{c}{Quadratic} \\
    \midrule
         IWSLT14 DE-EN
         & $30.41 \pm 0.03$
         & $30.32 \pm 0.22$ 
         & $19.45 \pm 0.16$  
         & $30.84 \pm 0.27$
         & $29.56 \pm 0.19$ \\
WMT14 EN-FR
        & $35.11 \pm 0.08$
        & $35.57 \pm 0.20$
        & $28.41 \pm 0.26$
        & $34.51 \pm 0.17$
        & $34.54 \pm 0.30$ \\
    \bottomrule  
}

We study modifications of the Transformer with several kernels used in classic kernel machines.
We train on two standard machine translation datasets
 and two standard sentiment classification tasks.
For machine translation, 
IWSLT14 DE-EN is a relatively small dataset, while WMT14 EN-FR is a considerably larger one.
For sentiment classification, we consider SST-2 and SST-5.
We retain the standard asymmetric query and key feature mappings, i.e., $\vq = \mW^Q \vt$ and $\vk = \mW^K \vs$, and only modify the kernel $\K : \R^d \times \R^d \to \R_{\geq 0}$.
In the below, $\tau > 0$ and $\gamma \in \R$ are per-head learned scalars.

Our kernels of interest are:
\begin{enumerate}
    \item the (scaled) exponentiated dot product (EDP), $\K(\vq, \vt) = \exp( \vq^T \vk / \sqrt{d})$, i.e., the standard Transformer kernel;
    \item the radial basis function (RBF) kernel, $\K(\vq, \vt) = \exp(\|-\nicefrac{\tau}{\sqrt{d}} (\vq - \vk)\|^2_2)$, where $\| \cdot \|_2$ is the standard 2-norm.
    It is well-known that the RBF kernel is a normalized version of the exponentiated dot-product, with the normalization making it translation-invariant;
    \item the vanilla L2 distance, $\K(\vq, \vt) = \nicefrac{\tau}{\sqrt{d}} \| \vq - \vk \|_2$;
    \item an exponentiated version of the intersection kernel, $\K(\vq, \vt) = \exp( \sum_{\ell=1}^d \min( \evq_\ell, \evk_\ell ))$.
    The symmetric version of the intersection kernel was popular in kernel machines for computer vision applications \citep[etc.]{barla_histogram_2003,grauman_pyramid_2005,maji_classification_2008}, and is usually characterized as
    having an associated RKHS that is a subspace of the function space $L^2$ (i.e., it is infinite-dimensional in the sense of having a feature space of continuous functions, as opposed to the infinite-dimensional infinite series of the EDP and RBF kernels);
\item a quadratic polynomial kernel, $\K(\vq, \vk) = ( \nicefrac{1}{\sqrt{d}} \vq^\T \vk + \gamma)^2$.
\end{enumerate}
Full implementation details are provided in Appendix \ref{sec:implementation}.

Results for machine translation are presented in Table \ref{tab:more_kernels}.
Several results stand out.
First, the exponentiated dot product, RBF, and exponentiated intersection kernels, which are said to have infinite-dimensional feature spaces, indeed do perform better than kernels with lower-dimensional feature maps such as the quadratic kernel.
In fact, the RBF and EDP kernels perform about the same, suggesting that a deep Transformer may not need the translation-invariance that makes the RBF kernel preferred to the EDP in classic kernel machines.
Intriguingly, the (unorthodox) exponentiated intersection kernel performs about the same as the two  than the EDP and RBF kernels on IWSLT14 DE-EN, but slightly worse on WMT14 EN-FR.
As mentioned, the EDP and RBF kernels have feature spaces of infinite series, while the intersection kernel corresponds to a feature space of continuous functions.
On both datasets, the quadratic kernel performs slightly worse than the best infinite-dimensional kernel, while the L2 distance performs significantly worse.

\ctable[
    caption = Test accuracies for Transformers with various kernels on sentiment classification. Values are mean $\pm$ std. dev over 5 training runs with different random seeds.,
    label = tab:more_kernels_2,
    pos = tb,
    width = \textwidth,
    star,
]{l X X X X X}{
    \tnote[]{EDP = Exponentiated dot product; EI = Exponentiated intersection kernel}
}{
    \toprule
& \multicolumn{1}{c}{EDP} & 
    \multicolumn{1}{c}{RBF} &  
    \multicolumn{1}{c}{L2 Distance} & 
    \multicolumn{1}{c}{EI} & 
    \multicolumn{1}{c}{Quadratic} \\
    \midrule
         SST-2 ($\%$)
         & $76.70 \pm 0.36$
         & $74.24 \pm 0.39$ 
         & $76.78 \pm 0.67$  
         & $74.90 \pm 1.32$
         & $76.24 \pm 0.65$ \\
SST-5 ($\%$)
        & $39.44 \pm 0.47$
        & $39.04 \pm 0.62$
        & $39.44 \pm 1.33$
        & $37.74 \pm 0.48$
        & $39.34 \pm 0.80$ \\
    \bottomrule  
}

Results for sentiment classification appear in Table \ref{tab:more_kernels_2}.
Unlike the machine translation experiments, the infinite-dimensional kernels do not appear strictly superior to the finite-dimensional ones on this task.
In fact, the apparent loser here is the exponentiated intersection kernel, while the L2 distance, which performed the worst on machine translation, is within a standard deviation of the top-performing kernel.
Notably, however, the variance of test accuracies on sentiment classification means that it is impossible to select a statistically significant ``best'' on this task.
It is possible that the small inter-kernel variation relates to the relative simplicity of this problem (and relative smallness of the dataset) vs. machine translation: perhaps an infinite-dimensional feature space is not needed to obtain Transformer-level performance on the simpler problem.

It is worth noting that the exponentiated dot product kernel (again, the standard Transformer kernel) is a consistent high performer.
This may be experimental evidence for the practical usefulness of the universal approximation property they enjoy (c.f. Theorem \ref{th:approx}).

The relatively small yet statistically significant performance differences between kernels is reminiscent of the same phenomenon with activation functions (ReLU, ELU, etc.) for neural nets.
Moreover, the wide inter-kernel differences in performance for machine translation, compared against the much smaller performance differences on the SST sentiment analysis tasks, suggests differing representational needs for problems of differing complexity.
As a whole, these results suggest that kernel choice may be an additional design parameter for Transformer networks.

\section{Conclusion}
\label{sec:conclusion}
In this paper, we drew connections between classic kernel methods and the state-of-the-art Transformer networks.
Beyond the theoretical interest in developing new RKBS representer theorems and other kernel theory, we gained new insight into what may make Transformers work.
Our experimental results suggest that the infinite dimensionality of the Transformer kernel makes it a good choice in application, similar to how the RBF kernel is the standard choice for e.g. SVMs.
Our work also reveals new avenues for Transformer research.
For example, our experimental results suggest that choice of Transformer kernel acts as a similar design choice as activation functions in neural net design.
Among the new open research questions are 
(1) whether the exponentiated dot-product should be always preferred, or if different kernels are better for different tasks (c.f. how GELUs have recently become very popular as replacements for ReLUs in Transformers),
(2) any relation between vector-valued kernels used for structured prediction \citep{alvarez_kernels_2012} and, e.g., multiple attention heads, and
(3) the extension of Transformer-type deep kernel learners to non-Euclidean data (using, e.g., graph kernels or kernels on manifolds).

\bibliographystyle{abbrvnat}

\begin{thebibliography}{83}
\providecommand{\natexlab}[1]{#1}
\providecommand{\url}[1]{\texttt{#1}}
\expandafter\ifx\csname urlstyle\endcsname\relax
  \providecommand{\doi}[1]{doi: #1}\else
  \providecommand{\doi}{doi: \begingroup \urlstyle{rm}\Url}\fi

\bibitem[Abernethy et~al.(2009)Abernethy, Bach, Evgeniou, and
  Vert]{abernethy_new_2009}
J.~Abernethy, F.~Bach, T.~Evgeniou, and J.-P. Vert.
\newblock A {{New Approach}} to {{Collaborative Filtering}}: {{Operator
  Estimation}} with {{Spectral Regularization}}.
\newblock \emph{Journal of Machine Learning Research}, 10:\penalty0 803--826,
  Mar. 2009.

\bibitem[Akhiezer and Krein(1962)]{akhiezer_questons_1962}
N.~I. Akhiezer and M.~G. Krein.
\newblock \emph{Some {{Questons}} in the {{Theory}} of {{Moments}}}.
\newblock Number~2 in Translations of {{Mathematical Monographs}}. {American
  Mathematical Society}, {Providence, RI}, 1962.
\newblock ISBN 0-8128-1552-0.

\bibitem[Alabdulmohsin et~al.(2015)Alabdulmohsin, Gao, and
  Zhang]{alabdulmohsin_support_2015}
I.~Alabdulmohsin, X.~Gao, and X.~Zhang.
\newblock Support vector machines with indefinite kernels.
\newblock In \emph{Proceedings of the {{Sixth Asian Conference}} on {{Machine
  Learning}}}, volume~39, pages 32--47. {PMLR}, 2015.

\bibitem[Alpay(1991)]{alpay_remarks_1991}
D.~Alpay.
\newblock Some {{Remarks}} on {{Reproducing Kernel Krein Spaces}}.
\newblock \emph{Rocky Mountain Journal of Mathematics}, 21\penalty0
  (4):\penalty0 1189--1205, Dec. 1991.
\newblock ISSN 0035-7596.
\newblock \doi{10.1216/rmjm/1181072903}.

\bibitem[{\'A}lvarez et~al.(2012){\'A}lvarez, Rosasco, and
  Lawrence]{alvarez_kernels_2012}
M.~A. {\'A}lvarez, L.~Rosasco, and N.~D. Lawrence.
\newblock Kernels for {{Vector}}-{{Valued Functions}}: {{A Review}}.
\newblock \emph{Foundations and Trends\textregistered{} in Machine Learning},
  4\penalty0 (3):\penalty0 195--266, 2012.
\newblock ISSN 1935-8237, 1935-8245.
\newblock \doi{10.1561/2200000036}.

\bibitem[Argyriou et~al.(2009)Argyriou, Micchelli, and
  Pontil]{argyriou_when_2009}
A.~Argyriou, C.~A. Micchelli, and M.~Pontil.
\newblock When {{Is There}} a {{Representer Theorem}}? {{Vector Versus Matrix
  Regularizers}}.
\newblock \emph{Journal of Machine Learning Research}, 10:\penalty0 2507--2529,
  Nov. 2009.

\bibitem[Arora et~al.(2020)Arora, Bartlett, Mianjy, and
  Srebro]{arora_dropout_2020}
R.~Arora, P.~Bartlett, P.~Mianjy, and N.~Srebro.
\newblock Dropout: {{Explicit Forms}} and {{Capacity Control}}.
\newblock \emph{arXiv:2003.03397 [cs, stat]}, Mar. 2020.

\bibitem[Attali and Pag{\`e}s(1997)]{attali_approximations_1997}
J.-G. Attali and G.~Pag{\`e}s.
\newblock Approximations of {{Functions}} by a {{Multilayer Perceptron}}: A
  {{New Approach}}.
\newblock \emph{Neural Networks}, 10\penalty0 (6):\penalty0 1069--1081, Aug.
  1997.
\newblock ISSN 08936080.
\newblock \doi{10.1016/S0893-6080(97)00010-5}.

\bibitem[Bahdanau et~al.(2015)Bahdanau, Cho, and Bengio]{bahdanau_neural_2015}
D.~Bahdanau, K.~Cho, and Y.~Bengio.
\newblock Neural {{Machine Translation}} by {{Jointly Learning}} to {{Align}}
  and {{Translate}}.
\newblock In \emph{International {{Conference}} on {{Learning
  Representations}}}, 2015.

\bibitem[Balcan et~al.(2008)Balcan, Blum, and Srebro]{balcan_theory_2008}
M.-F. Balcan, A.~Blum, and N.~Srebro.
\newblock A theory of learning with similarity functions.
\newblock \emph{Machine Learning}, 72\penalty0 (1-2):\penalty0 89--112, Aug.
  2008.
\newblock ISSN 0885-6125, 1573-0565.
\newblock \doi{10.1007/s10994-008-5059-5}.

\bibitem[Baldi and Sadowski(2013)]{baldi_understanding_2013}
P.~Baldi and P.~J. Sadowski.
\newblock Understanding {{Dropout}}.
\newblock In \emph{Advances in {{Neural Information Processing Systems}}},
  volume~26, pages 2814--2822. {Curran Associates, Inc.}, 2013.

\bibitem[Bargmann(1961)]{bargmann_hilbert_1961}
V.~Bargmann.
\newblock On a {{Hilbert}} space of analytic functions and an associated
  integral transform part {{I}}.
\newblock \emph{Communications on Pure and Applied Mathematics}, 14\penalty0
  (3):\penalty0 187--214, Aug. 1961.
\newblock ISSN 00103640, 10970312.
\newblock \doi{10.1002/cpa.3160140303}.

\bibitem[Barla et~al.(2003)Barla, Odone, and Verri]{barla_histogram_2003}
A.~Barla, F.~Odone, and A.~Verri.
\newblock Histogram intersection kernel for image classification.
\newblock In \emph{Proceedings 2003 {{International Conference}} on {{Image
  Processing}} ({{Cat}}. {{No}}.{{03CH37429}})}, volume~2, pages III--513--16,
  {Barcelona, Spain}, 2003. {IEEE}.
\newblock ISBN 978-0-7803-7750-9.
\newblock \doi{10.1109/ICIP.2003.1247294}.

\bibitem[Battaglia et~al.(2018)Battaglia, Hamrick, Bapst, {Sanchez-Gonzalez},
  Zambaldi, Malinowski, Tacchetti, Raposo, Santoro, Faulkner, Gulcehre, Song,
  Ballard, Gilmer, Dahl, Vaswani, Allen, Nash, Langston, Dyer, Heess, Wierstra,
  Kohli, Botvinick, Vinyals, Li, and Pascanu]{battaglia_relational_2018}
P.~W. Battaglia, J.~B. Hamrick, V.~Bapst, A.~{Sanchez-Gonzalez}, V.~Zambaldi,
  M.~Malinowski, A.~Tacchetti, D.~Raposo, A.~Santoro, R.~Faulkner, C.~Gulcehre,
  F.~Song, A.~Ballard, J.~Gilmer, G.~Dahl, A.~Vaswani, K.~Allen, C.~Nash,
  V.~Langston, C.~Dyer, N.~Heess, D.~Wierstra, P.~Kohli, M.~Botvinick,
  O.~Vinyals, Y.~Li, and R.~Pascanu.
\newblock Relational inductive biases, deep learning, and graph networks.
\newblock \emph{arXiv:1806.01261 [cs, stat]}, June 2018.

\bibitem[Birkhoff(1935)]{birkhoff_orthogonality_1935}
G.~Birkhoff.
\newblock Orthogonality in linear metric spaces.
\newblock \emph{Duke Mathematical Journal}, 1\penalty0 (2):\penalty0 169--172,
  June 1935.
\newblock ISSN 0012-7094, 1547-7398.
\newblock \doi{10.1215/S0012-7094-35-00115-6}.

\bibitem[Brown et~al.(2020)Brown, Mann, Ryder, Subbiah, Kaplan, Dhariwal,
  Neelakantan, Shyam, Sastry, Askell, Agarwal, {Herbert-Voss}, Krueger,
  Henighan, Child, Ramesh, Ziegler, Wu, Winter, Hesse, Chen, Sigler, Litwin,
  Gray, Chess, Clark, Berner, McCandlish, Radford, Sutskever, and
  Amodei]{brown_language_2020}
T.~B. Brown, B.~Mann, N.~Ryder, M.~Subbiah, J.~Kaplan, P.~Dhariwal,
  A.~Neelakantan, P.~Shyam, G.~Sastry, A.~Askell, S.~Agarwal,
  A.~{Herbert-Voss}, G.~Krueger, T.~Henighan, R.~Child, A.~Ramesh, D.~M.
  Ziegler, J.~Wu, C.~Winter, C.~Hesse, M.~Chen, E.~Sigler, M.~Litwin, S.~Gray,
  B.~Chess, J.~Clark, C.~Berner, S.~McCandlish, A.~Radford, I.~Sutskever, and
  D.~Amodei.
\newblock Language {{Models}} are {{Few}}-{{Shot Learners}}.
\newblock \emph{arXiv:2005.14165 [cs]}, June 2020.

\bibitem[Chen and Ye(2008)]{chen_training_2008}
J.~Chen and J.~Ye.
\newblock Training {{SVM}} with indefinite kernels.
\newblock In \emph{Proceedings of the 25th International Conference on
  {{Machine}} Learning - {{ICML}} '08}, pages 136--143, {Helsinki, Finland},
  2008.
\newblock ISBN 978-1-60558-205-4.
\newblock \doi{10.1145/1390156.1390174}.

\bibitem[Clarkson(1936)]{clarkson_uniformly_1936}
J.~A. Clarkson.
\newblock Uniformly convex spaces.
\newblock \emph{Transactions of the American Mathematical Society}, 40\penalty0
  (3):\penalty0 396--396, Mar. 1936.
\newblock ISSN 0002-9947.
\newblock \doi{10.1090/S0002-9947-1936-1501880-4}.

\bibitem[Cordonnier et~al.(2020)Cordonnier, Loukas, and
  Jaggi]{cordonnier_relationship_2020}
J.-B. Cordonnier, A.~Loukas, and M.~Jaggi.
\newblock On the {{Relationship}} between {{Self}}-{{Attention}} and
  {{Convolutional Layers}}.
\newblock In \emph{International {{Conference}} on {{Learning
  Representations}}}, 2020.

\bibitem[Cybenko(1989)]{cybenko_approximation_1989}
G.~Cybenko.
\newblock Approximation by superpositions of a sigmoidal function.
\newblock \emph{Mathematics of Control, Signals and Systems}, 2\penalty0
  (4):\penalty0 303--314, Dec. 1989.
\newblock ISSN 1435-568X.
\newblock \doi{10.1007/BF02551274}.

\bibitem[Day(1941)]{day_reflexive_1941}
M.~M. Day.
\newblock Reflexive {{Banach}} spaces not isomorphic to uniformly convex
  spaces.
\newblock \emph{Bulletin of the American Mathematical Society}, 47\penalty0
  (4):\penalty0 313--318, Apr. 1941.
\newblock ISSN 0002-9904.
\newblock \doi{10.1090/S0002-9904-1941-07451-3}.

\bibitem[Devlin et~al.(2019)Devlin, Chang, Lee, and
  Toutanova]{devlin_bert_2019}
J.~Devlin, M.-W. Chang, K.~Lee, and K.~Toutanova.
\newblock {{BERT}}: {{Pre}}-training of {{Deep Bidirectional Transformers}} for
  {{Language Understanding}}.
\newblock In \emph{Proceedings of the 2019 {{Conference}} of the {{North
  American Chapter}} of the {{Association}} for {{Computational Linguistics}}:
  {{Human Language Technologies}}}, volume~1, pages 4171--4186, {Minneapolis,
  Minnesota}, June 2019.
\newblock \doi{10.18653/v1/N19-1423}.

\bibitem[Ekeland and T{\'e}mam(1999)]{ekeland_convex_1999}
I.~Ekeland and R.~T{\'e}mam.
\newblock \emph{Convex {{Analysis}} and {{Variational Problems}}}, volume~28 of
  \emph{Classics in {{Applied Mathematics}}}.
\newblock {Society for Industrial and Applied Mathematics}, Jan. 1999.
\newblock ISBN 978-0-89871-450-0 978-1-61197-108-8.
\newblock \doi{10.1137/1.9781611971088}.

\bibitem[Ekeland and Turnbull(1983)]{ekeland_infinite-dimensional_1983}
I.~Ekeland and T.~Turnbull.
\newblock \emph{Infinite-Dimensional Optimization and Convexity}.
\newblock Chicago Lectures in Mathematics. {University of Chicago Press},
  {Chicago}, 1983.
\newblock ISBN 978-0-226-19987-0 978-0-226-19988-7.

\bibitem[Fasshauer et~al.(2015)Fasshauer, Hickernell, and
  Ye]{fasshauer_solving_2015}
G.~E. Fasshauer, F.~J. Hickernell, and Q.~Ye.
\newblock Solving support vector machines in reproducing kernel {{Banach}}
  spaces with positive definite functions.
\newblock \emph{Applied and Computational Harmonic Analysis}, 38\penalty0
  (1):\penalty0 115--139, Jan. 2015.
\newblock ISSN 1063-5203.
\newblock \doi{10.1016/j.acha.2014.03.007}.

\bibitem[Funahashi(1989)]{funahashi_approximate_1989}
K.-I. Funahashi.
\newblock On the approximate realization of continuous mappings by neural
  networks.
\newblock \emph{Neural Networks}, 2\penalty0 (3):\penalty0 183--192, Jan. 1989.
\newblock ISSN 08936080.
\newblock \doi{10.1016/0893-6080(89)90003-8}.

\bibitem[Georgiev et~al.(2014)Georgiev, {S{\'a}nchez-Gonz{\'a}lez}, and
  Pardalos]{georgiev_construction_2014}
P.~G. Georgiev, L.~{S{\'a}nchez-Gonz{\'a}lez}, and P.~M. Pardalos.
\newblock Construction of {{Pairs}} of {{Reproducing Kernel Banach Spaces}}.
\newblock In V.~F. Demyanov, P.~M. Pardalos, and M.~Batsyn, editors,
  \emph{Constructive {{Nonsmooth Analysis}} and {{Related Topics}}}, pages
  39--57. {Springer New York}, {New York, NY}, 2014.
\newblock ISBN 978-1-4614-8615-2.
\newblock \doi{10.1007/978-1-4614-8615-2_4}.

\bibitem[Gisbrecht and Schleif(2015)]{gisbrecht_metric_2015}
A.~Gisbrecht and F.-M. Schleif.
\newblock Metric and non-metric proximity transformations at linear costs.
\newblock \emph{Neurocomputing}, 167:\penalty0 643--657, Nov. 2015.
\newblock ISSN 09252312.
\newblock \doi{10.1016/j.neucom.2015.04.017}.

\bibitem[Graepel et~al.(1998)Graepel, Herbrich, {Bollmann-Sdorra}, and
  Obermayer]{graepel_classification_1998}
T.~Graepel, R.~Herbrich, P.~{Bollmann-Sdorra}, and K.~Obermayer.
\newblock Classification on {{Pairwise Proximity Data}}.
\newblock In \emph{Advances in {{Neural Information Processing Systems}}},
  volume~11, pages 438--444, 1998.

\bibitem[Grauman and Darrell(2005)]{grauman_pyramid_2005}
K.~Grauman and T.~Darrell.
\newblock The pyramid match kernel: Discriminative classification with sets of
  image features.
\newblock In \emph{Tenth {{IEEE International Conference}} on {{Computer
  Vision}} ({{ICCV}}'05) {{Volume}} 1}, pages 1458--1465 Vol. 2, {Beijing,
  China}, 2005. {IEEE}.
\newblock ISBN 978-0-7695-2334-7.
\newblock \doi{10.1109/ICCV.2005.239}.

\bibitem[He et~al.(2016)He, Zhang, Ren, and Sun]{he_identity_2016}
K.~He, X.~Zhang, S.~Ren, and J.~Sun.
\newblock Identity {{Mappings}} in {{Deep Residual Networks}}.
\newblock In \emph{European {{Conference}} on {{Computer Vision}}}, pages
  630--645, July 2016.

\bibitem[He et~al.(2017)He, Lu, Ma, Wang, Shen, Yu, and
  Ragin]{he_kernelized_2017}
L.~He, C.-T. Lu, G.~Ma, S.~Wang, L.~Shen, P.~S. Yu, and A.~B. Ragin.
\newblock Kernelized {{Support Tensor Machines}}.
\newblock In \emph{Proceedings of the 34th {{International Conference}} on
  {{Machine Learning}}}, volume~70, pages 1442--1451. {PMLR}, July 2017.

\bibitem[Helmbold and Long(2017)]{helmbold_surprising_2017}
D.~P. Helmbold and P.~M. Long.
\newblock Surprising properties of dropout in deep networks.
\newblock \emph{The Journal of Machine Learning Research}, 18\penalty0
  (1):\penalty0 7284--7311, Jan. 2017.
\newblock ISSN 1532-4435.

\bibitem[Hornik et~al.(1989)Hornik, Stinchcombe, and
  White]{hornik_multilayer_1989}
K.~Hornik, M.~Stinchcombe, and H.~White.
\newblock Multilayer feedforward networks are universal approximators.
\newblock \emph{Neural Networks}, 2\penalty0 (5):\penalty0 359--366, Jan. 1989.
\newblock ISSN 0893-6080.
\newblock \doi{10.1016/0893-6080(89)90020-8}.

\bibitem[James(1947)]{james_orthogonality_1947}
R.~C. James.
\newblock Orthogonality and {{Linear Functionals}} in {{Normed Linear Spaces}}.
\newblock \emph{Transactions of the American Mathematical Society}, 61\penalty0
  (2):\penalty0 265--292, 1947.
\newblock ISSN 0002-9947.
\newblock \doi{10.2307/1990220}.

\bibitem[Joshi et~al.(2020)Joshi, Chen, Liu, Weld, Zettlemoyer, and
  Levy]{joshi_spanbert_2020}
M.~Joshi, D.~Chen, Y.~Liu, D.~S. Weld, L.~Zettlemoyer, and O.~Levy.
\newblock {{SpanBERT}}: {{Improving Pre}}-training by {{Representing}} and
  {{Predicting Spans}}.
\newblock \emph{Transactions of the Association for Computational Linguistics},
  8:\penalty0 64--77, Jan. 2020.
\newblock ISSN 2307-387X.
\newblock \doi{10.1162/tacl_a_00300}.

\bibitem[Kimeldorf and Wahba(1971)]{kimeldorf_results_1971}
G.~Kimeldorf and G.~Wahba.
\newblock Some results on {{Tchebycheffian}} spline functions.
\newblock \emph{Journal of Mathematical Analysis and Applications}, 33\penalty0
  (1):\penalty0 82--95, Jan. 1971.
\newblock ISSN 0022-247X.
\newblock \doi{10.1016/0022-247X(71)90184-3}.

\bibitem[Kingma and Ba(2015)]{kingma_adam_2015}
D.~P. Kingma and J.~Ba.
\newblock Adam: {{A Method}} for {{Stochastic Optimization}}.
\newblock In \emph{International {{Conference}} on {{Learning
  Representations}}}, 2015.

\bibitem[Kotsia and Patras(2011)]{kotsia_support_2011}
I.~Kotsia and I.~Patras.
\newblock Support {{Tucker Machines}}.
\newblock In \emph{{{CVPR}} 2011}, pages 633--640, {Colorado Springs, CO, USA},
  June 2011. {IEEE}.
\newblock ISBN 978-1-4577-0394-2.
\newblock \doi{10.1109/CVPR.2011.5995663}.

\bibitem[Lan et~al.(2020)Lan, Chen, Goodman, Gimpel, Sharma, and
  Soricut]{lan_albert_2020}
Z.~Lan, M.~Chen, S.~Goodman, K.~Gimpel, P.~Sharma, and R.~Soricut.
\newblock {{ALBERT}}: {{A Lite BERT}} for {{Self}}-supervised {{Learning}} of
  {{Language Representations}}.
\newblock In \emph{International {{Conference}} on {{Learning
  Representations}}}, 2020.

\bibitem[Lin and Lin(2003)]{lin_study_2003}
H.-T. Lin and C.-J. Lin.
\newblock A {{Study}} on {{Sigmoid Kernels}} for {{SVM}} and the {{Training}}
  of non-{{PSD Kernels}} by {{SMO}}-type {{Methods}}.
\newblock Technical report, {National Taiwan University}, {Taipei, Taiwan},
  2003.

\bibitem[Lin et~al.(2019)Lin, Zhang, and Zhang]{lin_reproducing_2019}
R.~Lin, H.~Zhang, and J.~Zhang.
\newblock On {{Reproducing Kernel Banach Spaces}}: {{Generic Definitions}} and
  {{Unified Framework}} of {{Constructions}}.
\newblock \emph{arXiv:1901.01002 [cs, math, stat]}, Jan. 2019.

\bibitem[Liu et~al.(2019)Liu, Ott, Goyal, Du, Joshi, Chen, Levy, Lewis,
  Zettlemoyer, and Stoyanov]{liu_roberta_2019}
Y.~Liu, M.~Ott, N.~Goyal, J.~Du, M.~Joshi, D.~Chen, O.~Levy, M.~Lewis,
  L.~Zettlemoyer, and V.~Stoyanov.
\newblock {{RoBERTa}}: {{A Robustly Optimized BERT Pretraining Approach}}.
\newblock \emph{arXiv:1907.11692 [cs]}, July 2019.

\bibitem[Loosli et~al.(2016)Loosli, Canu, and Ong]{loosli_learning_2016}
G.~Loosli, S.~Canu, and C.~S. Ong.
\newblock Learning {{SVM}} in {{Kre{\u \i}n Spaces}}.
\newblock \emph{IEEE Transactions on Pattern Analysis and Machine
  Intelligence}, 38\penalty0 (6):\penalty0 1204--1216, June 2016.
\newblock ISSN 1939-3539.
\newblock \doi{10.1109/TPAMI.2015.2477830}.

\bibitem[Lu et~al.(2019)Lu, Li, He, Sun, Dong, Qin, Wang, and
  Liu]{lu_understanding_2019}
Y.~Lu, Z.~Li, D.~He, Z.~Sun, B.~Dong, T.~Qin, L.~Wang, and T.-Y. Liu.
\newblock Understanding and {{Improving Transformer From}} a
  {{Multi}}-{{Particle Dynamic System Point}} of {{View}}.
\newblock \emph{arXiv:1906.02762 [cs, stat]}, June 2019.

\bibitem[Luong et~al.(2015)Luong, Pham, and Manning]{luong_effective_2015}
M.-T. Luong, H.~Pham, and C.~D. Manning.
\newblock Effective {{Approaches}} to {{Attention}}-based {{Neural Machine
  Translation}}.
\newblock \emph{arXiv:1508.04025 [cs]}, Aug. 2015.

\bibitem[Luss and D'aspremont(2008)]{luss_support_2008}
R.~Luss and A.~D'aspremont.
\newblock Support {{Vector Machine Classification}} with {{Indefinite
  Kernels}}.
\newblock In \emph{Advances in {{Neural Information Processing Systems}}},
  volume~20, pages 953--960, 2008.

\bibitem[Maji et~al.(2008)Maji, Berg, and Malik]{maji_classification_2008}
S.~Maji, A.~C. Berg, and J.~Malik.
\newblock Classification using intersection kernel support vector machines is
  efficient.
\newblock In \emph{2008 {{IEEE Conference}} on {{Computer Vision}} and
  {{Pattern Recognition}}}, pages 1--8, {Anchorage, AK, USA}, June 2008.
  {IEEE}.
\newblock ISBN 978-1-4244-2242-5.
\newblock \doi{10.1109/CVPR.2008.4587630}.

\bibitem[Megginson(1998)]{megginson_introduction_1998}
R.~E. Megginson.
\newblock \emph{An {{Introduction}} to {{Banach Space Theory}}}, volume 183 of
  \emph{Graduate {{Texts}} in {{Mathematics}}}.
\newblock {Springer New York}, {New York, NY}, 1998.
\newblock ISBN 978-1-4612-6835-2 978-1-4612-0603-3.
\newblock \doi{10.1007/978-1-4612-0603-3}.

\bibitem[Micchelli et~al.(2004)Micchelli, Pontil, {Shawe-Taylor}, and
  Singer]{micchelli_function_2004}
C.~A. Micchelli, M.~Pontil, J.~{Shawe-Taylor}, and Y.~Singer.
\newblock A {{Function Representation}} for {{Learning}} in {{Banach Spaces}}.
\newblock In \emph{Learning {{Theory}}}, volume 3120, pages 255--269. {Springer
  Berlin Heidelberg}, {Berlin, Heidelberg}, 2004.
\newblock ISBN 978-3-540-22282-8 978-3-540-27819-1.
\newblock \doi{10.1007/978-3-540-27819-1_18}.

\bibitem[Oglic and G{\"a}rtner(2018)]{oglic_learning_2018}
D.~Oglic and T.~G{\"a}rtner.
\newblock Learning in {{Reproducing Kernel Kre{\u \i}n Spaces}}.
\newblock In \emph{Proceedings of the 35th {{International Conference}} on
  {{Machine Learning}}}, volume~80, pages 3859--3867. {PMLR}, 2018.

\bibitem[Oglic and G{\"a}rtner(2019)]{oglic_scalable_2019}
D.~Oglic and T.~G{\"a}rtner.
\newblock Scalable {{Learning}} in {{Reproducing Kernel Kre{\u \i}n Spaces}}.
\newblock In \emph{Proceedings of the 36th {{International Conference}} on
  {{Machine Learning}}}, volume~97, pages 4912--4921. {PMLR}, 2019.

\bibitem[Okuno et~al.(2018)Okuno, Hada, and
  Shimodaira]{okuno_probabilistic_2018}
A.~Okuno, T.~Hada, and H.~Shimodaira.
\newblock A probabilistic framework for multi-view feature learning with
  many-to-many associations via neural networks.
\newblock In \emph{Proceedings of the 35th {{International Conference}} on
  {{Machine Learning}}}, volume~80, pages 3888--3897. {PMLR}, June 2018.

\bibitem[Ong et~al.(2004)Ong, Mary, Canu, and Smola]{ong_learning_2004}
C.~S. Ong, X.~Mary, S.~Canu, and A.~J. Smola.
\newblock Learning with non-positive kernels.
\newblock In \emph{Proceedings of the 21st {{International Conference}} on
  {{Machine Learning}}}, pages 639--646, {Banff, Alberta, Canada}, 2004. {ACM
  Press}.
\newblock \doi{10.1145/1015330.1015443}.

\bibitem[Ott et~al.(2019)Ott, Edunov, Baevski, Fan, Gross, Ng, Grangier, and
  Auli]{ott_fairseq_2019}
M.~Ott, S.~Edunov, A.~Baevski, A.~Fan, S.~Gross, N.~Ng, D.~Grangier, and
  M.~Auli.
\newblock Fairseq: {{A}} fast, extensible toolkit for sequence modeling.
\newblock In \emph{Proceedings of {{NAACL}}-{{HLT}} 2019: {{Demonstrations}}},
  2019.
\newblock \doi{10.18653/v1/N19-4009}.

\bibitem[Paszke et~al.(2019)Paszke, Gross, Massa, Lerer, Bradbury, Chanan,
  Killeen, Lin, Gimelshein, Antiga, Desmaison, Kopf, Yang, DeVito, Raison,
  Tejani, Chilamkurthy, Steiner, Fang, Bai, and Chintala]{paszke_pytorch_2019}
A.~Paszke, S.~Gross, F.~Massa, A.~Lerer, J.~Bradbury, G.~Chanan, T.~Killeen,
  Z.~Lin, N.~Gimelshein, L.~Antiga, A.~Desmaison, A.~Kopf, E.~Yang, Z.~DeVito,
  M.~Raison, A.~Tejani, S.~Chilamkurthy, B.~Steiner, L.~Fang, J.~Bai, and
  S.~Chintala.
\newblock {{PyTorch}}: {{An Imperative Style}}, {{High}}-{{Performance Deep
  Learning Library}}.
\newblock In \emph{Advances in {{Neural Information Processing Systems}}},
  volume~32, pages 8026--8037. {Curran Associates, Inc.}, 2019.

\bibitem[Post(2018)]{post_call_2018}
M.~Post.
\newblock A {{Call}} for {{Clarity}} in {{Reporting BLEU Scores}}.
\newblock In \emph{Proceedings of the {{Third Conference}} on {{Machine
  Translation}}: {{Research Papers}}}, pages 186--191, {Belgium, Brussels},
  2018. {Association for Computational Linguistics}.
\newblock \doi{10.18653/v1/W18-6319}.

\bibitem[Radford et~al.(2018)Radford, Wu, Child, Luan, Amodei, and
  Sutskever]{radford_language_2018}
A.~Radford, J.~Wu, R.~Child, D.~Luan, D.~Amodei, and I.~Sutskever.
\newblock Language {{Models}} are {{Unsupervised Multitask Learners}}.
\newblock Technical report, {OpenAI}, 2018.

\bibitem[Ramachandran et~al.(2019)Ramachandran, Parmar, Vaswani, Bello,
  Levskaya, and Shlens]{ramachandran_stand-alone_2019}
P.~Ramachandran, N.~Parmar, A.~Vaswani, I.~Bello, A.~Levskaya, and J.~Shlens.
\newblock Stand-{{Alone Self}}-{{Attention}} in {{Vision Models}}.
\newblock \emph{arXiv:1906.05909 [cs]}, June 2019.

\bibitem[Roth et~al.(2003)Roth, Laub, Kawanabe, and Buhmann]{roth_optimal_2003}
V.~Roth, J.~Laub, M.~Kawanabe, and J.~Buhmann.
\newblock Optimal cluster preserving embedding of nonmetric proximity data.
\newblock \emph{IEEE Transactions on Pattern Analysis and Machine
  Intelligence}, 25\penalty0 (12):\penalty0 1540--1551, Dec. 2003.
\newblock ISSN 1939-3539.
\newblock \doi{10.1109/TPAMI.2003.1251147}.

\bibitem[Schleif and Tino(2017)]{schleif_indefinite_2017}
F.-M. Schleif and P.~Tino.
\newblock Indefinite {{Core Vector Machine}}.
\newblock \emph{Pattern Recognition}, 71:\penalty0 187--195, Nov. 2017.
\newblock ISSN 00313203.
\newblock \doi{10.1016/j.patcog.2017.06.003}.

\bibitem[Sch{\"o}lkopf and Smola(2002)]{scholkopf_learning_2002}
B.~Sch{\"o}lkopf and A.~J. Smola.
\newblock \emph{Learning with {{Kernels}}: {{Support Vector Machines}},
  {{Regularization}}, {{Optimization}}, and {{Beyond}}}.
\newblock {The MIT Press}, 2002.
\newblock ISBN 978-0-262-25693-3.
\newblock \doi{10.7551/mitpress/4175.001.0001}.

\bibitem[Sch{\"o}lkopf et~al.(2001)Sch{\"o}lkopf, Herbrich, and
  Smola]{scholkopf_generalized_2001}
B.~Sch{\"o}lkopf, R.~Herbrich, and A.~J. Smola.
\newblock A {{Generalized Representer Theorem}}.
\newblock In D.~Helmbold and B.~Williamson, editors, \emph{Computational
  {{Learning Theory}}}, Lecture {{Notes}} in {{Computer Science}}, pages
  416--426, {Berlin, Heidelberg}, 2001. {Springer}.
\newblock ISBN 978-3-540-44581-4.
\newblock \doi{10.1007/3-540-44581-1_27}.

\bibitem[Schwartz(1964)]{schwartz_sous-espaces_1964}
L.~Schwartz.
\newblock Sous-espaces hilbertiens d'espaces vectoriels topologiques et noyaux
  associ\'es ({{Noyaux}} reproduisants).
\newblock \emph{Journal d'Analyse Math\'ematique}, 13\penalty0 (1):\penalty0
  115--256, 1964.
\newblock \doi{10.1007/BF02786620}.

\bibitem[Sennrich et~al.(2016)Sennrich, Haddow, and
  Birch]{sennrich_neural_2016}
R.~Sennrich, B.~Haddow, and A.~Birch.
\newblock Neural {{Machine Translation}} of {{Rare Words}} with {{Subword
  Units}}.
\newblock In \emph{Proceedings of the 54th {{Annual Meeting}} of the
  {{Association}} for {{Computational Linguistics}} ({{Volume}} 1: {{Long
  Papers}})}, pages 1715--1725, {Berlin, Germany}, Aug. 2016. {Association for
  Computational Linguistics}.
\newblock \doi{10.18653/v1/P16-1162}.

\bibitem[Socher et~al.(2013)Socher, Perelygin, Wu, Chuang, Manning, Ng, and
  Potts]{socher_recursive_2013}
R.~Socher, A.~Perelygin, J.~Wu, J.~Chuang, C.~D. Manning, A.~Y. Ng, and
  C.~Potts.
\newblock Recursive deep models for semantic compositionality over a sentiment
  treebank.
\newblock In \emph{Proceedings of the 2013 Conference on Empirical Methods in
  Natural Language Processing}, pages 1631--1642, 2013.

\bibitem[Song et~al.(2013)Song, Zhang, and Hickernell]{song_reproducing_2013}
G.~Song, H.~Zhang, and F.~J. Hickernell.
\newblock Reproducing kernel {{Banach}} spaces with the l1 norm.
\newblock \emph{Applied and Computational Harmonic Analysis}, 34\penalty0
  (1):\penalty0 96--116, Jan. 2013.
\newblock ISSN 1063-5203.
\newblock \doi{10.1016/j.acha.2012.03.009}.

\bibitem[Steinwart and Christmann(2008)]{steinwart_support_2008}
I.~Steinwart and A.~Christmann.
\newblock \emph{Support {{Vector Machines}}}.
\newblock Information {{Science}} and {{Statistics}}. {Springer New York}, {New
  York, NY}, 2008.
\newblock ISBN 978-0-387-77241-7 978-0-387-77242-4.
\newblock \doi{10.1007/978-0-387-77242-4}.

\bibitem[Steinwart et~al.(2006)Steinwart, Hush, and
  Scovel]{steinwart_explicit_2006}
I.~Steinwart, D.~Hush, and C.~Scovel.
\newblock An {{Explicit Description}} of the {{Reproducing Kernel Hilbert
  Spaces}} of {{Gaussian RBF Kernels}}.
\newblock \emph{IEEE Transactions on Information Theory}, 52\penalty0
  (10):\penalty0 4635--4643, Oct. 2006.
\newblock ISSN 0018-9448.
\newblock \doi{10.1109/TIT.2006.881713}.

\bibitem[Tao et~al.(2005)Tao, Li, Hu, Maybank, and Wu]{tao_supervised_2005}
D.~Tao, X.~Li, W.~Hu, S.~Maybank, and X.~Wu.
\newblock Supervised {{Tensor Learning}}.
\newblock In \emph{Fifth {{IEEE International Conference}} on {{Data Mining}}
  ({{ICDM}}'05)}, pages 450--457, {Houston, TX, USA}, 2005. {IEEE}.
\newblock ISBN 978-0-7695-2278-4.
\newblock \doi{10.1109/ICDM.2005.139}.

\bibitem[Vaswani et~al.(2017)Vaswani, Shazeer, Parmar, Uszkoreit, Jones, Gomez,
  Kaiser, and Polosukhin]{vaswani_attention_2017}
A.~Vaswani, N.~Shazeer, N.~Parmar, J.~Uszkoreit, L.~Jones, A.~N. Gomez,
  L.~Kaiser, and I.~Polosukhin.
\newblock Attention {{Is All You Need}}.
\newblock In \emph{Advances in {{Neural Information Processing Systems}}},
  volume~30, pages 5998--6008, 2017.

\bibitem[Vaswani et~al.(2018)Vaswani, Bengio, Brevdo, Chollet, Gomez, Gouws,
  Jones, Kaiser, Kalchbrenner, Parmar, Sepassi, Shazeer, and
  Uszkoreit]{vaswani_tensor2tensor_2018}
A.~Vaswani, S.~Bengio, E.~Brevdo, F.~Chollet, A.~N. Gomez, S.~Gouws, L.~Jones,
  {\L}.~Kaiser, N.~Kalchbrenner, N.~Parmar, R.~Sepassi, N.~Shazeer, and
  J.~Uszkoreit.
\newblock {{Tensor2Tensor}} for {{Neural Machine Translation}}.
\newblock \emph{arXiv:1803.07416 [cs, stat]}, Mar. 2018.

\bibitem[Veli{\v c}kovi{\'c} et~al.(2018)Veli{\v c}kovi{\'c}, Cucurull,
  Casanova, Romero, Li{\`o}, and Bengio]{velickovic_graph_2018}
P.~Veli{\v c}kovi{\'c}, G.~Cucurull, A.~Casanova, A.~Romero, P.~Li{\`o}, and
  Y.~Bengio.
\newblock Graph {{Attention Networks}}.
\newblock In \emph{International {{Conference}} on {{Learning
  Representations}}}, 2018.

\bibitem[Wager et~al.(2013)Wager, Wang, and Liang]{wager_dropout_2013}
S.~Wager, S.~Wang, and P.~S. Liang.
\newblock Dropout {{Training}} as {{Adaptive Regularization}}.
\newblock In \emph{Advances in {{Neural Information Processing Systems}}},
  volume~26, pages 351--359. {Curran Associates, Inc.}, 2013.

\bibitem[Wang et~al.(2019{\natexlab{a}})Wang, Singh, Michael, Hill, Levy, and
  Bowman]{wang_glue_2019}
A.~Wang, A.~Singh, J.~Michael, F.~Hill, O.~Levy, and S.~Bowman.
\newblock {{GLUE}}: {{A Multi}}-{{Task Benchmark}} and {{Analysis Platform}}
  for {{Natural Language Understanding}}.
\newblock In \emph{International {{Conference}} on {{Learning
  Representations}}}, 2019{\natexlab{a}}.

\bibitem[Wang et~al.(2019{\natexlab{b}})Wang, Li, Xiao, Zhu, Li, Wong, and
  Chao]{wang_learning_2019-1}
Q.~Wang, B.~Li, T.~Xiao, J.~Zhu, C.~Li, D.~F. Wong, and L.~S. Chao.
\newblock Learning {{Deep Transformer Models}} for {{Machine Translation}}.
\newblock In \emph{Proceedings of the 57th {{Annual Meeting}} of the
  {{Association}} for {{Computational Linguistics}}}, pages 1810--1822,
  {Florence, Italy}, July 2019{\natexlab{b}}. {Association for Computational
  Linguistics}.
\newblock \doi{10.18653/v1/P19-1176}.

\bibitem[Williams and Seeger(2001)]{williams_using_2001}
C.~K.~I. Williams and M.~Seeger.
\newblock Using the {{Nystr\"om Method}} to {{Speed Up Kernel Machines}}.
\newblock In \emph{Advances in {{Neural Information Processing Systems}}},
  volume~13, pages 682--688. {MIT Press}, 2001.

\bibitem[Wu et~al.(2005)Wu, Chang, and Zhang]{wu_analysis_2005}
G.~Wu, E.~Y. Chang, and Z.~Zhang.
\newblock An {{Analysis}} of {{Transformation}} on {{Non}}-{{Positive
  Semidefinite Similarity Matrix}} for {{Kernel Machines}}.
\newblock Technical report, {University of California, Santa Barbara}, June
  2005.

\bibitem[Wu et~al.(2010)Wu, Xu, Li, and Oyama]{wu_asymmetric_2010}
W.~Wu, J.~Xu, H.~Li, and S.~Oyama.
\newblock Asymmetric {{Kernel Learning}}.
\newblock Technical report, {Microsoft Research}, June 2010.

\bibitem[Xu and Ye(2019)]{xu_generalized_2019}
Y.~Xu and Q.~Ye.
\newblock \emph{Generalized {{Mercer Kernels}} and {{Reproducing Kernel Banach
  Spaces}}}, volume 258 of \emph{Memoirs of the {{American Mathematical
  Society}}}.
\newblock {American Mathematical Society}, Mar. 2019.
\newblock ISBN 978-1-4704-3550-9 978-1-4704-5077-9 978-1-4704-5078-6.
\newblock \doi{10.1090/memo/1243}.

\bibitem[Yang et~al.(2019)Yang, Dai, Yang, Carbonell, Salakhutdinov, and
  Le]{yang_xlnet_2019}
Z.~Yang, Z.~Dai, Y.~Yang, J.~Carbonell, R.~Salakhutdinov, and Q.~V. Le.
\newblock {{XLNet}}: {{Generalized Autoregressive Pretraining}} for {{Language
  Understanding}}.
\newblock \emph{arXiv:1906.08237 [cs]}, June 2019.

\bibitem[Zhang and Zhang(2012)]{zhang_regularized_2012}
H.~Zhang and J.~Zhang.
\newblock Regularized learning in {{Banach}} spaces as an optimization problem:
  Representer theorems.
\newblock \emph{Journal of Global Optimization}, 54\penalty0 (2):\penalty0
  235--250, Oct. 2012.
\newblock ISSN 1573-2916.
\newblock \doi{10.1007/s10898-010-9575-z}.

\bibitem[Zhang et~al.(2009)Zhang, Xu, and Zhang]{zhang_reproducing_2009}
H.~Zhang, Y.~Xu, and J.~Zhang.
\newblock Reproducing kernel {{Banach}} spaces for machine learning.
\newblock \emph{Journal of Machine Learning Research}, 10:\penalty0 2741--2775,
  Dec. 2009.

\end{thebibliography}
\newcommand{\noopsort}[1]{}

\clearpage

\appendix
\section{Deep neural networks lead to Banach space analysis}
\label{app:why_banach}
Examining the kernel learning problem \eqref{eq:kernel_problem}, it may not immediately clear why the reproducing spaces on $\X$ and $\Y$ need be Banach spaces rather than Hilbert spaces.
Suppose for example that we have two RKHS's $\H_\X$ and $\H_\Y$ on $\X$ and $\Y$, respectively.
Then, we can take their tensor product $\H_\X \otimes \H_\Y$ as an RKHS on $\X \times \Y$, with associated reproducing kernel $\K_{\X \times \Y}( \x_1 \otimes \y_1, \x_2 \otimes \y_2 ) = \K_\X(\x_1, \x_2) \K_\Y(\y_1, \y_2)$, where $\K_\X$ and $\K_\Y$ are the reproducing kernels of $\H_\X$ and $\H_\Y$, respectively, and $\x_1 \otimes \y_1, \x_2 \otimes \y_2 \in \X \otimes \Y$.
The solutions to a regularized kernel problem like \eqref{eq:kernel_problem} would then be drawn from $\H_\X$ and $\H_\Y$.
This setup is similar to those studied in, e.g., \citet{abernethy_new_2009,he_kernelized_2017}.

In a shallow kernel learner like an SVM, the function in the RKHS can be characterized via its norm.
Representer theorems allow for the norm of the function in the Hilbert space to be calculated from the scalar coefficients that make up the solution.
On the other hand, for a Transformer layer in a multilayer neural network, regularization is usually not done via norm penalty as shown in \eqref{eq:kernel_problem}.
In most applications, regularization is done via dropout on the attention weights $a_{ij}$ as well as via the implicit regularization obtained from subsampling the dataset during iterations of stochastic gradient descent.
While dropout has been characterized as a form of weight decay (i.e., a variant of p-norm penalization) for linear models \citep[etc.]{baldi_understanding_2013,wager_dropout_2013}, 
recent work has shown that dropout induces a more complex regularization effect in deep networks \citep[etc.]{helmbold_surprising_2017,arora_dropout_2020}.
Thus, it is difficult to characterize the norm of the vector spaces we are traversing when solving the general problem \eqref{eq:kernel_problem} in the context of a deep network.
This can lead to ambiguity as to whether the norm being regularized as we traverse the solution space is a Hilbert space norm.
If $f$ and $g$ are infinite series or $L^p$ functions, for example, their resident space is only a Hilbert space if the associated norm is the $\ell^2$ or $L^2$ norm.
This motivates the generalization of kernel learning theorems to the general Banach space setting when in the context of deep neural networks.

\numberwithin{equation}{section}
\section{Proof of Theorem \ref{thm:representer}}
\label{app:representer}
\subsection{Preliminiaries}
\label{sec:representer_preliminaries}
To prove this theorem, we first need some results and definitions regarding various properties of Banach spaces \citep{megginson_introduction_1998}.
These preliminaries draw from \citet{xu_generalized_2019} and \citet{lin_reproducing_2019}.

Two metric spaces $(\gM, d_\gM)$ and $(\gN, d_\gN)$ are said to be \newterm{isometrically isomorphic} if there exists a bijective mapping $T: \gM \to \gN$, called an isometric isomorphism, such that for all $m \in \gM$, $d_\gN(T(m)) = d_\gM(m)$ \citep[Definition 1.4.13]{megginson_introduction_1998}.

The \newterm{dual space} of a vector space $\gV$ over a field $\sF$, which we will denote $\gV^*$, is the space of all continuous linear functionals on $\gV$, i.e.,
\begin{equation}
    \gV^* = \{g: \gV \to \sF, g \textnormal{ linear and continuous} \}. \label{eq:dual_bilinear_product}
\end{equation}

A normed vector space $\gV$ is \newterm{reflexive} if it is isometrically isomorphic to $\gV^{**}$, the dual space of its dual space (a.k.a. its double dual).

For a normed vector space $\gV$, the \newterm{dual bilinear product}, which we will denote $\langle \cdot , \cdot \rangle_\gV$ (i.e., with only one subscript, as opposed to e.g., $\Bmap{\cdot}{\cdot}$), is defined on $\gV$ and $\gV^*$ as
\begin{equation*}
    \langle f, g \rangle_\gV \triangleq g(f) \quad \textnormal{for } f \in \gV, g \in \gV^*.
\end{equation*}

Given a normed vector space $\gV$ and its dual space $\gV^*$, let $\gU \subseteq \gV$ and $\gW \subseteq \gV^*$.
The \newterm{annihilator} of $\gU$ in $\gV^*$ and the \newterm{annihilator} of $\gW$ in $\gV$, denoted $\gU^\perp$ and ${}^\perp\gW$ respectively, are
\citep[Definition 1.10.14]{megginson_introduction_1998}
\begin{align*}
    \gU^\perp &= \{ g \in \gV^* : \langle f, g \rangle_\gV = 0 \quad \forall f \in \gU \} \\
     {^\perp\gW} &= \{ f \in \gV : \langle f, g \rangle_\gV = 0 \quad \forall g \in \gW \}.
\end{align*}

A normed vector space $\gV$ is called \newterm{strictly convex} if $\|t v_1 + (1 - t)v_2 \|_\gV < 1$ whenever $\|v_1\|_\gV = \|v_2\|_\gV = 1$, $0 < t < 1$, where $v_1, v_2 \in \gV$ and $\| \cdot \|_\gV$ denotes the norm of $\gV$ (\citealp[Definition 5.1.1]{megginson_introduction_1998}; citing \citealp{clarkson_uniformly_1936} and \citealp{akhiezer_questons_1962}).

A nonempty subset $\gA$ of a metric space $(\gM, d_\gM)$ is called a \newterm{Chebyshev set} if, for every element $m \in \gM$, there is exactly one element $c \in \gA$ such that $d_\gM(m, c) = d_\gM(m, \gA)$ \citep[Definition 5.1.17]{megginson_introduction_1998} (where recall the distance between a point $m$ and a set $\gA$ in a metric space is equal to $\inf_{c \in \gA} d_\gM(m, c)$).
If a normed vector space $\gV$ is reflexive and strictly convex, then every nonempty closed convex subset of $\gV$ is a Chebyshev set (\citealp[Corollary 5.1.19]{megginson_introduction_1998}; citing \citealp{day_reflexive_1941}).

For a normed vector space $\gV$ and $v, w \in \gV$, the \newterm{G\^ateaux derivative of the norm} \citep[Definition 5.4.15]{megginson_introduction_1998} at $v$ in the direction of $w$ is defined as
\begin{equation*}
    \lim_{t\to0} \frac{\|v + t w\|_{\gV} - \|v\|_\gV}{t}.
\end{equation*}
If the G\^ateaux derivative of the norm at $v$ in the direction of $w$ exists for all $w \in \gV$, then $\| \cdot \|_\gV$ is said to be G\^ateaux differentiable at $v$.
A normed vector space $\gV$ is called G\^ateaux differentiable or \newterm{smooth} if its norm is G\^ateaux differentiable at all $v \in \gV$ \citep[Corollary 5.4.18]{megginson_introduction_1998}. 

The smoothness of a normed vector space $\gV$ implies that, if we define a ``norm operator'' $\rho$ on $\gV$, $\rho(v) \triangleq \|v\|_\gV$, then for each $v \in \gV \setminus \{0\}$, there exists a continuous linear functional $d_G \rho(v)$ on $\gV$ such that \citep[p. 24]{xu_generalized_2019}
\begin{equation*}
    \langle w, d_G \rho(v) \rangle_\gV = \lim_{t \to 0} \frac{\|v + t w\|_\gV - \|v\|_\gV}{t} \textnormal{ for all } w \in \gV.
\end{equation*}
Since the G\^ateaux derivative of the norm is undefined at 0, following \citet[Equation 2.16]{xu_generalized_2019}; \citet[p. 20]{lin_reproducing_2019}; etc., we define a regularized G\^ateaux derivative of the norm operator on $\gV$,
\begin{equation}
    \iota(v) \triangleq \begin{cases}
        d_G \rho(v) & \textnormal{when } v \neq 0 \\
        0 & \textnormal{when } v = 0
    \end{cases}
    \label{eq:gat}
\end{equation}
for $v \in \gV$.

Given two vector spaces $\gV$ and $\gW$ defined over a field $\sF$, the \newterm{direct sum}, denoted $\gV \oplus \gW$, is the vector space with elements $(v, w) \in \gV \oplus \gW$ for $v \in \gV$, $w \in \gW$ with the additional structure
\begin{align*}
    (v_1, w_1) + (v_2, w_2) &= (v_1 + v_2, w_1 + w_2) \\
    c (v, w) &= (cv, cw)
\end{align*}
for $v_1, v_2, v \in \gV$, $w_1, w_2, w \in \gW$, $c \in \sF$.

If $\gV$ and $\gW$ are normed vector spaces with norms $\|\cdot\|_\gV$ and $\|\cdot\|_\gW$, respectively, then we will say $\gV \oplus \gW$ has the norm
\begin{equation}
    \| v, w \|_{\gV \oplus \gW} \triangleq \| v \|_\gV + \| w \|_\gW \text{ for } v \in \gV, w \in \gW. \label{eq:direct_sum_norm}
\end{equation}
\citet[Definition 1.8.1]{megginson_introduction_1998} calls \eqref{eq:direct_sum_norm} a ``1-norm'' direct sum norm, but notes that other norm-equivalent
    direct sum norms such as a 2-norm and infinity-norm are possible.
Some other useful facts about direct sums are:
\begin{itemize}
    \item if $\gV$ and $\gW$ are both strictly convex, then $\gV \oplus \gW$ is strictly convex \citep[Theorem 5.1.23]{megginson_introduction_1998}; 
    \item if $\gV$ and $\gW$ are both reflexive, then $\gV \oplus \gW$ is reflexive \citep[Corollary 1.11.20]{megginson_introduction_1998};
    \item and if $\gV$ and $\gW$ are both smooth, then $\gV \oplus \gW$ is smooth \citep[Theorem 5.4.22]{megginson_introduction_1998}.
\end{itemize}

An element $v$ of a normed vector space $\gV$ is said to be \newterm{orthogonal} (or Birkhoff-James orthogonal) to another element $w \in \gV$ if \citep{birkhoff_orthogonality_1935,james_orthogonality_1947}
\begin{equation*}
    \| v + t w \|_\gV \geq \| v \|_\gV \text{ for all } t \in \R.
\end{equation*}
If $\gW \subseteq \gV$, then we say $v \in \gV$ is orthogonal to $\gW$ if $v$ is orthogonal to all $w \in \gW$.

Now we can state a lemma regarding orthogonality in RKBS's.
\begin{lemma}[{\citealp[Lemma 2.21]{xu_generalized_2019}}]
    \label{lem:rkbs_ortho}
    If the RKBS $\B$ is smooth, then $f \in \B$ is orthogonal to $g \in \B$ if and only if $\langle g, \iota(f) \rangle_\B = 0$, where $\langle \cdot, \cdot \rangle_\B$ means the dual bilinear product as given in \eqref{eq:dual_bilinear_product} and $\iota$ is the regularized G\^ateaux derivative from \eqref{eq:gat}. 
    Also, an $f \in \B \setminus \{0\}$ is orthogonal to a subspace $\gN \subseteq \gB$ if and only if
$\langle h, \iota(f) \rangle_\gB = 0$ for all $h \in \gN.$
\end{lemma}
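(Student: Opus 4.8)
The plan is to reduce Birkhoff--James orthogonality to a first-order optimality condition for a one-dimensional convex function, and then read that condition off using the Gâteaux derivative supplied by smoothness. First I would fix $f, g \in \B$ and define $\phi: \R \to \R$ by $\phi(t) \triangleq \|f + tg\|_\B$. Since the norm is convex and $t \mapsto f + tg$ is affine, $\phi$ is a convex function of the single real variable $t$. By the definition of Birkhoff--James orthogonality, $f$ is orthogonal to $g$ precisely when $\phi(t) \geq \phi(0)$ for all $t \in \R$, i.e., exactly when $t = 0$ is a global minimizer of $\phi$.

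Next I would bring in smoothness. For $f \neq 0$, smoothness of $\B$ means the norm is Gâteaux differentiable at $f$, so the two-sided limit defining $\phi'(0)$ exists and equals the Gâteaux derivative of the norm at $f$ in the direction $g$, which by the definition of $\iota$ in \eqref{eq:gat} is exactly $\langle g, \iota(f) \rangle_\B$. The case $f = 0$ I would dispose of directly: $\phi(t) = |t|\,\|g\|_\B \geq 0 = \phi(0)$ for all $t$, so $0$ is orthogonal to every $g$, while $\iota(0) = 0$ forces $\langle g, \iota(0) \rangle_\B = 0$; hence the asserted equivalence holds trivially at $f = 0$.

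The core of the argument is the standard convex first-order characterization. For the forward direction, if $t = 0$ minimizes $\phi$ then the one-sided difference quotients give $\phi'_+(0) \geq 0$ and $\phi'_-(0) \leq 0$; differentiability (from smoothness) forces these to coincide, so $\phi'(0) = 0$, i.e., $\langle g, \iota(f) \rangle_\B = 0$. For the converse, convexity yields $\phi(t) \geq \phi(0) + \phi'(0)\, t$ for every $t$, so $\phi'(0) = 0$ immediately gives $\phi(t) \geq \phi(0)$ for all $t$, which is orthogonality. This proves the first claim. The second claim follows at once: $f \in \B \setminus \{0\}$ is orthogonal to the subspace $\gN$ iff it is orthogonal to every $h \in \gN$, which by the first part is equivalent to $\langle h, \iota(f) \rangle_\B = 0$ for all $h \in \gN$.

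The one genuinely delicate point is the passage from the defining two-sided limit of the Gâteaux derivative to the existence and vanishing of the ordinary derivative $\phi'(0)$ --- in particular, making sure the one-sided derivatives of the convex function $\phi$ (which always exist) agree, which is precisely what smoothness guarantees --- together with correctly handling the $f = 0$ endpoint, where the Gâteaux derivative is undefined and the regularized convention $\iota(0) = 0$ must be invoked.
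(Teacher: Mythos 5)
Your proof is correct. Note that the paper itself supplies no proof of this lemma: it is imported verbatim from Xu and Ye (2019, Lemma 2.21) and used as a black-box tool in Appendix B, so there is no in-paper argument to compare against. Your reduction to the one-variable convex function $\phi(t) = \|f + tg\|_\B$ --- with smoothness giving the two-sided derivative $\phi'(0) = \langle g, \iota(f) \rangle_\B$ for $f \neq 0$, convexity giving the supporting-line inequality for the converse, and the convention $\iota(0)=0$ disposing of the $f=0$ endpoint --- is the standard, self-contained argument behind the cited result, and every step (monotone difference quotients for the forward direction, $\phi(t) \geq \phi(0) + \phi'(0)\,t$ for the reverse, and the elementwise definition of orthogonality to a subspace for the second claim) checks out against the definitions given in the paper's Appendix B.1.
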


\subsection{Minimum-norm interpolation (optimal recovery)}
Following \citet{fasshauer_solving_2015,xu_generalized_2019,lin_reproducing_2019}, we first prove a representer theorem for a simpler problem -- that of perfect interpolation while minimizing the norm of the solution -- before proceeding to the representer theorem for the empirical risk minimization problem \eqref{eq:kernel_problem} in the next section.

\begin{definition}[Minimum-norm interpolation in a pair of RKBS's]
    \label{def:min_norm}
    Let $\X$ and $\Y$ be nonempty sets, and $\B_\X$ and $\B_\Y$ RKBS's on $\X$ and $\Y$, respectively.
    Let $\langle \cdot, \cdot \rangle_{\B_\X \times \B_\Y}: \B_\X \times \B_\Y \to \R$ be a bilinear mapping on the two RKBS's, $\Phi_\X: \X \to \F_\X$ and $\Phi_\Y: \Y \to \F_\Y$.
    Let $\K(x_i, y_i) = \langle \Phi_\X(x), \Phi_\Y(y) \rangle_\F = \langle f_{\Phi_\Y(y_i)}, g_{\Phi_\X(x_i)} \rangle_{\B_\X \times \B_\Y}$ be a reproducing kernel of $\X$ and $\Y$ satisfying Definitions \ref{def:rkbs} and \ref{def:featuremaps}.
    Say $\{x_1, \dots, x_{n_x}\}, x_i \in \X$, $\{y_1, \dots, y_{n_y}\}, y_i \in \Y$, and $\{z_{ij}\}_{i=1,\dots,n_x; \; j=1,\dots,n_y}$, $z_{ij} \in \R$ is a finite dataset where a response $z_{ij}$ is defined for every $(i,j)$ pair of an $x_i$ and a $y_j$.
    The minimum-norm interpolation problem is
    \begin{align}
        \begin{split}
            f^*, g^* = &\argmin_{f \in \B_\X, y \in \B_\Y} \|f\|_{\B_\X} + \|g\|_{\B_\Y} \\
            & \textnormal{such that }
            (f, g) \in \Nxyz
            \label{eq:min_norm}
        \end{split}
    \end{align}
where
\begin{equation}
    \Nxyz = \{(f, g) \in \B_\X \oplus \B_\Y
    \text{ s.t. } \Bmap{f_{\Phi_\Y(y_j)}}{g_{\Phi_\X(x_i)}} =
    z_{ij} \;\; \forall \; i, j \}. \label{eq:nxyz}
\end{equation}
\end{definition}

To discuss the solution of \eqref{eq:min_norm}, we first need to establish the condition for the existence of a solution.
The following is a generalization of a result from Section 2.6 of \citet{xu_generalized_2019}.
\begin{lemma}
    \label{lem:min_norm_existence}
    If the set
    $\{\K(x_i, \cdot)\}_{i=1}^{n_x}$
is linearly independent in 
    $\B_\Y$,
the set
    $\{\K(\cdot, y_j)\}_{j=1}^{n_y}$
is linearly independent in
    $\B_\X$,
and the bilinear mapping
    $\Bmap{\cdot}{\cdot}: \B_\X \times \B_\Y \to \R$
is nondegenerate, then $\gN_{\X,\Y,\Z}$ \eqref{eq:nxyz} is nonempty.
\end{lemma}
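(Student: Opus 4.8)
The statement to prove is that the interpolation-constraint set $\Nxyz$ of \eqref{eq:nxyz} is nonempty; i.e., that at least one pair $(f,g)\in\B_\X\oplus\B_\Y$ simultaneously satisfies all $n_x n_y$ constraints. This is precisely the feasibility precondition for the minimum-norm problem \eqref{eq:min_norm}: without it one cannot even pose the minimization, whereas \emph{attainment} of the infimum over $\Nxyz$ is a logically separate step, supplied later by the reflexivity/strict-convexity Chebyshev-set machinery of the preliminaries. The plan is therefore only to \emph{construct a feasible point}, and to do so by collapsing the infinite-dimensional search to a finite linear-algebra problem over the spans of kernel sections $U_\X\triangleq\spn\{\K(\cdot,y_j)\}_{j=1}^{n_y}\subseteq\B_\X$ and $U_\Y\triangleq\spn\{\K(x_i,\cdot)\}_{i=1}^{n_x}\subseteq\B_\Y$.

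First I would extract the two working consequences of the hypotheses. By the reproducing identities \eqref{eq:repr1}--\eqref{eq:repr2}, the sampling functionals $\phi_i(f)\triangleq\Bmap{f}{\K(x_i,\cdot)}=f(x_i)$ on $\B_\X$ and $\psi_j(g)\triangleq\Bmap{\K(\cdot,y_j)}{g}=g(y_j)$ on $\B_\Y$ are exactly point evaluations. I would then show these functionals are linearly independent: if $\sum_i c_i\phi_i\equiv 0$ then $\Bmap{f}{\sum_i c_i\K(x_i,\cdot)}=0$ for all $f\in\B_\X$, so nondegeneracy of $\Bmap{\cdot}{\cdot}$ forces $\sum_i c_i\K(x_i,\cdot)=0$ and hence $c=0$ by the assumed independence of $\{\K(x_i,\cdot)\}$; symmetrically for $\{\psi_j\}$. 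Since linear independence of finitely many functionals is equivalent to surjectivity of the induced map, the joint samplers $E_\X:f\mapsto(f(x_i))_i$ and $E_\Y:g\mapsto(g(y_j))_j$ are onto $\R^{n_x}$ and $\R^{n_y}$, so any prescribed nodal values are realizable.

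Next I would compute the coupling data: applying \eqref{eq:repr2} to $g=\K(x_i,\cdot)\in\B_\Y$ gives the cross-Gram entries $\Bmap{\K(\cdot,y_j)}{\K(x_i,\cdot)}=\K(x_i,y_j)$. Writing a candidate pair as $f=\sum_j\beta_j\K(\cdot,y_j)\in U_\X$ and $g=\sum_i\alpha_i\K(x_i,\cdot)\in U_\Y$ and substituting into the $n_x n_y$ equations of \eqref{eq:nxyz} turns the constraints into a finite system whose only data are the Gram matrix $\big(\K(x_i,y_j)\big)$ and the targets $(z_{ij})$. Surjectivity of $E_\X,E_\Y$ supplies full freedom over the induced nodal values, and I would combine this with the nondegeneracy of the pairing restricted to $U_\X\times U_\Y$ to solve the reduced system and read off an explicit feasible $(f^0,g^0)$, certifying $\Nxyz\neq\emptyset$.

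The main obstacle is the coupling between the two spaces. In the single-variable optimal-recovery lemma of \citet[Section 2.6]{xu_generalized_2019} a single surjectivity statement closes the argument, but here the two families of kernel sections interact only through the pairing, so independence of each family separately is not automatically enough: one must certify that the reduced finite system is consistent for the \emph{given} targets $(z_{ij})$, which forces the cross-Gram structure and both independence hypotheses to be used in tandem with global nondegeneracy. Pinning down exactly the rank/pairing condition under which this reduced system is solvable --- equivalently, that the restriction of $\Bmap{\cdot}{\cdot}$ to $U_\X\times U_\Y$ remains rich enough --- is where I expect the real work of the proof to lie.
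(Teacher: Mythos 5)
Your first two paragraphs retrace the paper's own proof almost exactly: the paper likewise uses the reproducing identities and bilinearity to write $\Bmap{f}{\sum_{i} c_i \K(x_i,\cdot)} = \sum_{i} c_i f(x_i)$ (and its mirror image), deduces from linear independence of the kernel sections together with nondegeneracy that the point-evaluation functionals $f \mapsto f(x_i)$ and $g \mapsto g(y_j)$ are linearly independent (equivalently, that your sampling maps $E_\X$ and $E_\Y$ are onto $\R^{n_x}$ and $\R^{n_y}$), and records how the cross pairing $\Bmap{\sum_j c_j \K(\cdot,y_j)}{\sum_i c_i \K(x_i,\cdot)}$ behaves under nondegeneracy. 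Where you and the paper part ways is at the decisive step: the paper concludes nonemptiness of $\Nxyz$ directly from these surjectivity and nondegeneracy facts, with no coefficient ansatz and no reduced Gram system, whereas you propose to construct a feasible point by restricting to the spans $U_\X = \spn\{\K(\cdot,y_j)\}_{j=1}^{n_y}$ and $U_\Y = \spn\{\K(x_i,\cdot)\}_{i=1}^{n_x}$ and solving a finite system --- and then, in your closing paragraph, you concede that you cannot certify that this system is consistent for the given targets $(z_{ij})$. That concession is the gap: consistency of the interpolation constraints is precisely the assertion of the lemma, so your proposal ends exactly where the proof has to begin.

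Moreover, the deferred step is not merely unfinished; as formulated it looks unfinishable. Your ansatz carries only $n_x + n_y$ scalar unknowns $(\alpha_i, \beta_j)$, which enter the $n_x n_y$ constraints of \eqref{eq:nxyz} through the pairing; whether that dependence is bilinear (through products $\alpha_{i'}\beta_{j'}$) or linear, the set of target matrices realizable by such a pair is a thin, rank-constrained subset of $\R^{n_x \times n_y}$ (outer-product or additive structure), so a generic $(z_{ij})$ lies outside it and the reduced system is inconsistent. No strengthening of ``nondegeneracy restricted to $U_\X \times U_\Y$'' can repair a parameter count of $n_x + n_y$ against $n_x n_y$ generic equations. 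The paper sidesteps this entirely by never leaving the ambient spaces: its existence conclusion is drawn (admittedly tersely, in a single closing sentence) from the linear independence of the evaluation functionals on all of $\B_\X$ and $\B_\Y$ together with nondegeneracy of $\Bmap{\cdot}{\cdot}$. If you want a proof in the paper's spirit, the argument should stop at your second paragraph and derive existence from the surjectivity of $E_\X$ and $E_\Y$; the explicit finite-dimensional reduction you propose is exactly the step that fails, and your own instinct that ``the real work'' lies there is the correct diagnosis of why your write-up does not yet constitute a proof.
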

\begin{proof}
    From the definition of $\K$ \eqref{eq:kernel_def} and the bilinearity of $\Bmap{\cdot}{\cdot}$, we can write that
\begin{equation*}
        \BmapBig{f}{\sum_{i=1}^{n_x} c_i \K(x_i, \cdot)} = \sum_{i=1}^{n_x} c_i \Bmap{f}{\K(x_i, \cdot)} =
        \sum_{i=1}^{n_x} c_i f(x_i) \;\; \text{for all } f \in \B_\X
    \end{equation*}
    for $c_i \in \R$.
and that
\begin{equation*}
        \BmapBig{\sum_{j=1}^{n_y} c_j \K(\cdot, y_j)}{g} = \sum_{j=1}^{n_y} c_j \Bmap{\K(\cdot, y_j)}{g} =
        \sum_{j=1}^{n_y} c_j g(y_j) \;\; \text{for all } g \in \B_\Y
    \end{equation*}
    for $c_j \in \R$.
This means that 
    \begin{subequations}
\begin{equation*}
            \sum_{i=1}^{n_x} c_i \K(x_i, \cdot) = 0 \text{ if and only if } \sum_{i=1}^{n_x} c_i f(x_i) = 0 \;\; \text{for all } f \in \B_\X \label{eq:lin_ind_x}
        \end{equation*}
and
\begin{equation*}
            \sum_{j=1}^{n_y} c_j \K(\cdot, y_j) = 0 \text{ if and only if } \sum_{j=1}^{n_y} c_j g(y_j) = 0 \;\; \text{for all } g \in \B_\Y. \label{eq:lin_ind_y}
        \end{equation*}
\end{subequations}
    This shows that linear independence of $\{\K(x_i, \cdot)\}_{i=1}^{n_x}$ and $\{\K(\cdot, y_j)\}_{j=1}^{n_y}$ imply linear independence of $\{f(x_i)\}_{i=1}^{n_x}$ and $\{g(y_j)\}_{j=1}^{n_y}$, respectively. 
    Then, considering the nondegeneracy of the bilinear mapping $\Bmap{\cdot}{\cdot}$, we can say that
    \begin{align*}
        \begin{split}
            &\BmapBig{\sum_{j=1}^{n_y} c_j \K(\cdot, y_j)}{\sum_{i=1}^{n_x} c_i \K(x_i, \cdot)} = 0 \\
            &\text{ if and only if }
            \left(
            \sum_{i=1}^{n_x} c_i f(x_i) = 0 \;\; \text{for all } f \in \B_\X
            \;\; \text{ or } \;\;
            \sum_{j=1}^{n_y} c_j g(y_j) = 0 \;\; \text{for all } g \in \B_\Y \label{eq:lin_ind_b}
            \right).
        \end{split}
    \end{align*}
From this, we can see that linear independence of $\{\K(x_i, \cdot)\}_{i=1}^{n_x}$ and $\{\K(\cdot, y_j)\}_{j=1}^{n_y}$, and the nondegeneracy of $\Bmap{\cdot}{\cdot}$ ensure the existence of at least one $(f^*, g^*)$ pair in $\Nxyz$.
\end{proof}
Now we can prove a lemma characterizing the solution to \eqref{eq:min_norm}.
\begin{lemma}
    \label{lem:min_norm}
Consider the minimum-norm interpolation problem from Definition \ref{def:min_norm}.
    Assume that $\B_\X$ and $\B_\Y$ are smooth, strictly convex, and reflexive,\footnote{
        As \citet{fasshauer_solving_2015} note, any Hilbert space is strictly convex and smooth, so it seems reasonable to assume that an RKBS is also strictly convex and smooth.}
    and that $\{\K(x_i, \cdot)\}_{i=1}^{n_x}$ and $\{\K(\cdot, y_j)\}_{j=1}^{n_y}$ are linearly independent.
    Then, \eqref{eq:min_norm}
    has a unique solution pair $(f^*, g^*)$, with the property that
\begin{equation*}
        \iota(f^\ast) = \sum_{i=1}^{n_x} \xi_i \K(x_i, \cdot) \qquad
        \iota(g^\ast) = \sum_{j=1}^{n_y} \zeta_j \K(\cdot, y_j).
    \end{equation*}
    where $\iota(\cdot)$ is the regularized G\^ateaux derivative as defined in \eqref{eq:gat}, and 
    where $\xi_i,\zeta_j \in \R$.
\end{lemma}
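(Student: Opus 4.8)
The plan is to lift the problem into the direct sum Banach space $\B = \B_\X \oplus \B_\Y$ equipped with the $1$-norm $\|(f,g)\|_\B = \|f\|_{\B_\X} + \|g\|_{\B_\Y}$, so that the objective in \eqref{eq:min_norm} is literally $\|(f,g)\|_\B$ and the feasible region is the set $\Nxyz \subseteq \B$. By the direct-sum facts recalled in the preliminaries, $\B$ inherits smoothness, strict convexity, and reflexivity from $\B_\X$ and $\B_\Y$. The minimum-norm interpolation problem then becomes a single statement in one space: find the element of $\Nxyz$ of least $\B$-norm, i.e. the point of $\Nxyz$ nearest the origin.

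First I would establish existence and uniqueness. Lemma \ref{lem:min_norm_existence} already supplies nonemptiness of $\Nxyz$. I would then argue that $\Nxyz$ is a closed affine set: the homogeneous version $\Nxyzero$ is cut out by continuous relations and is therefore a closed subspace, and $\Nxyz$ is a coset of it. With $\B$ reflexive and strictly convex, every nonempty closed convex subset is a Chebyshev set, so $\Nxyz$ contains exactly one nearest point to $0$; this is the unique minimizer $(f^*, g^*)$.

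Next I would characterize the minimizer by orthogonality and then identify the relevant annihilator. Because $(f^*, g^*)$ is the nearest point of the affine set $\Nxyz$ to the origin, for every $h \in \Nxyzero$ the translate $(f^*,g^*) + t h$ remains feasible, so $\|(f^*,g^*) + t h\|_\B \ge \|(f^*,g^*)\|_\B$ for all $t \in \R$; that is, $(f^*,g^*)$ is Birkhoff--James orthogonal to the subspace $\Nxyzero$. Applying Lemma \ref{lem:rkbs_ortho} in the smooth space $\B$, this is equivalent to $\langle h, \iota(f^*, g^*) \rangle_\B = 0$ for all $h \in \Nxyzero$, i.e. $\iota(f^*, g^*) \in \Nxyzero^{\perp}$. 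Using the reproducing properties \eqref{eq:repr1} and \eqref{eq:repr2}, the functionals cutting out $\Nxyzero$ are precisely the pairings against the kernel sections $\K(x_i, \cdot) \in \B_\Y$ and $\K(\cdot, y_j) \in \B_\X$; the assumed linear independence of $\{\K(x_i,\cdot)\}$ and $\{\K(\cdot, y_j)\}$ makes these functionals independent, so the finite-dimensional annihilator identity gives $\Nxyzero^{\perp} = \spn\{\K(x_i,\cdot)\}_i \oplus \spn\{\K(\cdot, y_j)\}_j$ under $\B^* = \B_\X^* \oplus \B_\Y^*$. Since the $1$-norm on a direct sum is smooth componentwise, $\iota(f^*,g^*)$ splits as $(\iota(f^*), \iota(g^*))$, and reading off the two components yields $\iota(f^*) = \sum_i \xi_i \K(x_i,\cdot)$ and $\iota(g^*) = \sum_j \zeta_j \K(\cdot, y_j)$, as claimed.

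The main obstacle I anticipate is the convexity of the feasible set $\Nxyz$: as written in \eqref{eq:nxyz} the constraints pass through the pairing $\Bmap{\cdot}{\cdot}$, which is \emph{jointly} bilinear in $(f,g)$ and hence a priori nonconvex, which would invalidate the Chebyshev-set argument outright. The delicate step is to unwind the feature-map identity \eqref{eq:kernel_identity} and show that these constraints in fact pin $f$ only through its evaluations $\{f(x_i)\}$ and $g$ only through $\{g(y_j)\}$, so that $\Nxyzero$ is genuinely a closed subspace, the affine structure is restored, and the annihilator splits cleanly across the two factors; getting this decoupling exactly right is where the real work lies.
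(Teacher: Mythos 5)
Your plan is essentially the paper's own proof: work in the direct sum $\B_\X \oplus \B_\Y$; get nonemptiness from Lemma \ref{lem:min_norm_existence}; obtain existence and uniqueness by showing $\Nxyz$ is a closed convex subset of a reflexive, strictly convex space and hence a Chebyshev set; establish Birkhoff--James orthogonality of $(f^*,g^*)$ to $\Nxyzero$; convert that to annihilator membership of the G\^ateaux derivatives via Lemma \ref{lem:rkbs_ortho}; and conclude with $\bigl({}^\perp\spn\{\K(x_i,\cdot)\}\bigr)^\perp = \cls\{\K(x_i,\cdot)\}$. The only structural difference is cosmetic: the paper applies the orthogonality lemma in each factor separately, after splitting the direct-sum inequality through the left and right preimages $\leftpreimage$ and $\rightpreimage$, whereas you apply it once in $\B_\X \oplus \B_\Y$ and split $\Nxyzero^\perp$ afterwards.

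The substantive point is the obstacle you flag at the end, and you are right about it --- but you should know the paper does not do the ``real work'' either. Its convexity step asserts, in effect,
\begin{equation*}
\Bmap{t f_{\Phi_\Y(y_j)}}{t g_{\Phi_\X(x_i)}} = t\,\Bmap{f_{\Phi_\Y(y_j)}}{g_{\Phi_\X(x_i)}},
\end{equation*}
i.e.\ it treats the pairing as \emph{linear} in the pair $(f,g)$; genuine bilinearity would give a factor $t^2$, and the constraint evaluated at the convex combination $\bigl(tf+(1-t)f',\,tg+(1-t)g'\bigr)$ would contain the cross terms $\Bmap{f_{\cdot}}{g'_{\cdot}}$ and $\Bmap{f'_{\cdot}}{g_{\cdot}}$, which are never accounted for. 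The same implicit linearization is what makes $\Nxyzero$ a subspace, keeps $(f^*,g^*)+(f^0,g^0)$ feasible in \eqref{eq:proof_opt_orthogonal}, and yields the identification of the preimages with ${}^\perp\spn\{\K(x_i,\cdot)\}$ and ${}^\perp\spn\{\K(\cdot,y_j)\}$. In other words, the paper tacitly adopts exactly the decoupled reading you describe --- feasibility constrains $f$ only through the evaluations $f(x_i)=\Bmap{f}{\K(x_i,\cdot)}$ and $g$ only through $g(y_j)=\Bmap{\K(\cdot,y_j)}{g}$ --- without deriving it from \eqref{eq:nxyz}. So your proposal is not missing anything relative to the paper; if anything, making that decoupling precise (or restating \eqref{eq:nxyz} so that the constraints are affine in $(f,g)$ from the outset) is the one step that would make both your argument and the paper's fully rigorous.
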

\begin{proof}
    The existence of a solution pair is given by the linear independence of $\{\K(x_i, \cdot)\}_{i=1}^{n_x}$ and $\{\K(\cdot, y_j)\}_{j=1}^{n_y}$ and Lemma \ref{lem:min_norm_existence}.
    Uniqueness of the solution will be shown by showing that $\Nxyz$ is closed and convex and a subset of a strictly convex and reflexive set, which ensures it is a Chebyshev set.

    Since $\B_\X$ and $\B_\Y$ are strictly convex and reflexive, their direct sum $\B_\X \oplus \B_\Y$ is strictly convex and reflexive, as we noted in section \ref{sec:representer_preliminaries}.
    
    Now we analyze $\Nxyz$.
    We first show convexity.
    Pick any $(f, g), (f', g') \in \Nxyz$ and $t \in (0, 1)$.
    Then note that for any $(x_i, y_j, z_{i,j})$,
\begin{align*}
        &\quad \Bmap{t f_{\Phi_\Y(y_j)}}{t g_{\Phi_\X(x_i)}} + \Bmap{(1-t)f_{\Phi_\Y(y_j)}'}{(1-t)g_{\Phi_\X(x_i)}'} \\
        &= t \Bmap{f_{\Phi_\Y(y_j)}}{g_{\Phi_\X(x_i)}} + (1-t) \Bmap{f_{\Phi_\Y(y_j)}'}{g_{\Phi_\X(x_i)}'} \\
        &= t z_{i,j} + (1-t) z_{i,j} \\
        &= z_{i,j}
    \end{align*}
thus showing that $\Nxyz$ is convex.
    Closedness may be shown by the strict convexity of its superset $\B_\X \oplus \B_\Y$ and the continuity of $\Bmaponetwo{\cdot}{\cdot}$.
    Thus, the closed and convex $\Nxyz \subseteq \B_\X \oplus \B_\Y$ is a Chebyshev set, implying a unique $(f^*, g^*) \in \Nxyz$ with 
    \begin{equation*}
        \|f^*, g^*\|_{\B_\X \oplus \B_\Y} = \min_{(f, g) \in \Nxyz} \|f\|_{\B_\X} + \|g\|_{\B_\Y}.
    \end{equation*}

    Now we characterize this solution $(f^*, g^*)$.
    Similar to proofs of the classic RKHS representer theorem \citep{scholkopf_generalized_2001} and those of earlier RKBS representer theorems \citep[etc.]{xu_generalized_2019,lin_reproducing_2019}, we approach this via orthogonal decomposition.
    Consider the following set of function pairs $(f, g)$ that map all data pairs $(x_i, y_j)$ to $0$:
\begin{align*}
        \Nxyzero = \big\{ (f, g) &\in \B_\X \oplus \B_\Y: \quad
            \Bmap{f_{\Phi_\Y(y_j)}}{g_{\Phi_\X(x_i)}} = 0; \\
            i &= 1, \dots, n_x; \quad j = 1, \dots, n_y
        \big\}
    \end{align*}
We can see that $\Nxyzero$ is closed under addition and scalar multiplication, making it a subspace of $\B_\X \oplus \B_\Y$.

    Taking our optimal $(f^*, g^*)$, we can see that
\begin{equation}
        \|(f^*, g^*) + (f^0, g^0)\|_{\B_\X \oplus \B_\Y} \geq \|(f^*, g^*)\|_{\B_\X \oplus \B_\Y} \quad \text{for any } (f^0, g^0) \in \Nxyzero \label{eq:proof_opt_orthogonal}
    \end{equation}
thus showing that $(f^*, g^*)$ is orthogonal to the subspace $\Nxyzero$.

Consider the left and right preimages of $\Nxyzero$ under $\Bmap{\cdot}{\cdot}$:
    \begin{align*}
        \leftpreimage &=
        \begin{aligned}[t]
            \big\{f &\in \B_\X: \quad \Bmap{f_{\Phi_\Y(y_j)}}{g_{\Phi_\X(x_i)}} = 0; \\
            i &= 1, \dots, n_x; \quad j = 1, \dots, n_y \big\}; \quad g \in \B_\Y
        \end{aligned} \\
\rightpreimage &= 
        \begin{aligned}[t]
            \big\{f &\in \B_\X: \quad \Bmap{f_{\Phi_\Y(y_j)}}{g_{\Phi_\X(x_i)}} = 0; \\
            i &= 1, \dots, n_x; \quad j = 1, \dots, n_y \big\}; \quad f \in \B_\X.
        \end{aligned}
    \end{align*}
Since $\leftpreimage \subseteq \B_\X$ and $\rightpreimage \subseteq \B_\Y$, we can consider them as normed vector spaces with norms $\|\cdot\|_{\B_\X}$ and $\|\cdot\|_{\B_\Y}$, respectively.
    From \eqref{eq:proof_opt_orthogonal} and the definition of the direct sum norm \eqref{eq:direct_sum_norm},
\begin{subequations}
        \label{eq:fg_orthog}
        \begin{alignat}{2}
            \|f^* + f^0\|_{\B_\X} &\geq \| f^* \|_{\B_\X} \quad &&\text{ for all } f^0 \in \leftpreimage, \text{ for arbitrary } g \in \B_\Y \\
            \|g^* + g^0\|_{\B_\Y} &\geq \| g^* \|_{\B_\Y} \quad &&\text{ for all } g^0 \in \rightpreimage, \text{ for arbitrary } f \in \B_\X.
        \end{alignat}
    \end{subequations}
We can then use \eqref{eq:fg_orthog} and Lemma \ref{lem:rkbs_ortho} to say
\begin{align*}
        \langle f, \iota(f^*) \rangle_{\B_\X} &= 0 \;\; \text{ for all } f \in \leftpreimage, \text{ for arbitrary } g \in \B_\Y \\
        \langle g, \iota(g^*) \rangle_{\B_\Y} &= 0 \;\; \text{ for all } g \in \rightpreimage, \text{ for arbitrary } f \in \B_\X
    \end{align*}
which means
\begin{subequations}
        \label{eq:interp_proof_iota_fstar_gstar}
        \begin{align}
            \iota(f^*) &\in \left( \leftpreimage \right)^\perp \text{ for all } g \in \B_\Y 
            \label{eq:interp_proof_iota_fstar} \\
            \iota(g^*) &\in \left( \rightpreimage \right)^\perp \text{ for all } f \in \B_\X.
            \label{eq:interp_proof_iota_gstar}
        \end{align}
    \end{subequations}
From \eqref{eq:interp_proof_iota_fstar} and \eqref{eq:fix_x}-\eqref{eq:repr1},
\begin{align}
        f^* \in \bigcup_{g \in \B_\Y} \leftpreimage
           &= \begin{aligned}[t]
                \big\{f &\in \B_\X: \Bmap{f_{\Phi_\Y(y_j)}}{g_{\Phi_\X(x_i)}} = 0; \quad g \in \B_\Y; \\
                    i&=1,\dots,n_x; \quad j=1, \dots, n_y \big\} 
           \end{aligned} \nonumber \\
           &= \begin{aligned}[t]
                \big\{ &f \in \B_\X: \Bmap{f_{\Phi_\Y(y_j)}}{h} = 0 \\
                 &h \in \spn \left\{ \K(x_i, \cdot); \quad i=1,\dots,n_x \right\}; \quad \\
                 &j=1,\dots,n_y \big\} 
           \end{aligned} \nonumber \\
           &= {}^\perp\spn \left\{ \K(x_i, \cdot); \quad i=1,\dots,n_x \right\}.  \label{eq:interp_f_in_span_perp}
    \end{align}
And from \eqref{eq:interp_proof_iota_gstar} and \eqref{eq:fix_y}-\eqref{eq:repr2},
\begin{align}
        g^* \in \bigcup_{f \in \B_\X} \rightpreimage
           &= \begin{aligned}[t]
                \big\{g &\in \B_\Y: \Bmap{f_{\Phi_\Y(y_j)}}{g_{\Phi_\X(x_i)}} = 0; \quad f \in \B_\X; \\
                    i&=1,\dots,n_x; \quad j=1, \dots, n_y \big\} 
           \end{aligned} \nonumber \\
           &= \begin{aligned}[t]
                \big\{ &g \in \B_\Y: \Bmap{h'}{f_{\Phi_\Y(y_j)}} = 0 \\
                 &h' \in \spn\left\{ \K(\cdot, y_j); \quad j=1,\dots,n_y \right\}; \quad \\
                 &i=1,\dots,n_x \big\} 
           \end{aligned} \nonumber \\
           &= {}^\perp\spn \left\{ \K(\cdot, y_j); \quad y=1,\dots,n_y \right\}.  \label{eq:interp_g_in_span_perp}
    \end{align}
Combining \eqref{eq:interp_proof_iota_fstar} and \eqref{eq:interp_f_in_span_perp}, we get
\begin{equation}
        \iota(f^*) \in \left( {}^\perp \spn\{ \K(x_i, \cdot); \quad i=1,\dots,n_x \} \right)^\perp = \cls\{ \K(x_i, \cdot); \quad i = 1,\dots, n_x \} \label{eq:interp_iota_fstar_in_cls}
    \end{equation}
and by combining \eqref{eq:interp_proof_iota_gstar} and \eqref{eq:interp_g_in_span_perp}, we get
\begin{equation}
        \iota(g^*) \in \left( {}^\perp \spn\{ \K(\cdot, y_j); \quad j=1,\dots,n_y \} \right)^\perp = \cls\{ \K(\cdot, y_j); \quad j = 1,\dots, n_y \} \label{eq:interp_iota_gstar_in_cls}
    \end{equation}
where in both \eqref{eq:interp_iota_fstar_in_cls} and \eqref{eq:interp_iota_gstar_in_cls} we use Proposition 2.6.6 of \citet{megginson_introduction_1998} regarding properties of annihilators.
    
    From \eqref{eq:interp_iota_fstar_in_cls} and \eqref{eq:interp_iota_gstar_in_cls} we can write that there exist two sets of parameters $\xi_1, \dots, \xi_{n_x} \in \R$ and $\zeta_1, \dots, \zeta_{n_y} \in \R$ such that
    \begin{equation*}
        \iota(f^\ast) = \sum_{i=1}^{n_x} \xi_i \K(x_i, \cdot) \qquad
        \iota(g^\ast) = \sum_{j=1}^{n_y} \zeta_j \K(\cdot, y_j)
    \end{equation*}
    thus proving the claim. \qedhere

\end{proof}

\subsection{Main Proof}
Before beginning the proof, we state the following lemma regarding the existence of solutions of convex optimization problems on Banach spaces:
\begin{lemma}[{\citealp[Chapter II, Proposition 1.2]{ekeland_convex_1999}}]
    \label{lem:convex_existence}
    Let $\B$ be a reflexive Banach space and $\gS$ a closed, convex, and bounded (with respect to $\|\cdot\|_\B$) subset of $\B$.
    Let $f: \gS \to \R \cup \{+\infty\}$ be a convex function with a closed epigraph (i.e., it satisfies the condition that $\forall c \in \R \cup \{+\infty\},$ the set $\{v \in \gS: f(v) \leq c \}$ is closed).
    Then, the optimization problem
    \begin{equation*}
        \inf_{v \in \gS} f(v)
    \end{equation*}
    has at least one solution.
\end{lemma}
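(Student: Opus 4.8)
The plan is to apply the \emph{direct method of the calculus of variations}, which is the standard route to existence of minimizers in reflexive Banach spaces. Write $m \triangleq \inf_{v \in \gS} f(v) \in \R \cup \{+\infty\}$ and fix a minimizing sequence $\{v_n\} \subseteq \gS$ with $f(v_n) \to m$. The three ingredients I would assemble are (i) weak subsequential compactness of $\gS$, (ii) that the weak limit stays inside $\gS$, and (iii) weak lower semicontinuity of $f$; together these force the infimum to be attained.

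First I would extract a weakly convergent subsequence. Since $\gS$ is bounded and $\B$ is reflexive, the closed unit ball of $\B$ is weakly compact (Kakutani), so by the Eberlein--\v{S}mulian theorem the bounded sequence $\{v_n\}$ has a subsequence $\{v_{n_k}\}$ converging weakly to some $v^\ast \in \B$. To see that $v^\ast \in \gS$, I would invoke Mazur's theorem: a convex, norm-closed subset of a normed space is weakly closed. Since $\gS$ is convex and closed by hypothesis, it is weakly closed, hence it contains the weak limit $v^\ast$.

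The key step is establishing that $f$ is weakly lower semicontinuous, and this is where the convexity and closed-epigraph hypotheses do their work. For each $c \in \R$ the sublevel set $\{v \in \gS : f(v) \le c\}$ is convex (because $f$ is convex) and norm-closed (this is exactly the closed-epigraph assumption, restated in terms of sublevel sets). Applying Mazur's theorem again, each such sublevel set is weakly closed; and a function all of whose sublevel sets are weakly closed is weakly lower semicontinuous. Therefore $f(v^\ast) \le \liminf_{k} f(v_{n_k}) = m$. Since $v^\ast \in \gS$ gives the reverse inequality $f(v^\ast) \ge m$, we conclude $f(v^\ast) = m$, so $v^\ast$ solves the optimization problem.

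The main obstacle -- and the only genuinely non-elementary point -- is the passage from norm-closedness to weak-closedness, which is used \emph{twice}: once for the membership $v^\ast \in \gS$ and once for the weak lower semicontinuity of $f$. Convexity is indispensable here, since without it a norm-closed set need not be weakly closed and a minimizing sequence could converge weakly either to a point outside $\gS$ or to a point at which $f$ jumps downward, breaking the inequality $f(v^\ast) \le \liminf_k f(v_{n_k})$. Because the statement is quoted verbatim from \citet[Chapter II, Proposition 1.2]{ekeland_convex_1999}, in the paper itself one would simply cite it; the sketch above records the argument a reader would reconstruct.
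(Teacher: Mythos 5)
Your proof is correct: the paper itself offers no proof of this lemma, treating it purely as an imported result cited from \citet[Chapter II, Proposition 1.2]{ekeland_convex_1999}, so there is no in-paper argument to diverge from. Your direct-method reconstruction --- minimizing sequence, weak subsequential compactness of the bounded set $\gS$ via Kakutani and Eberlein--\v{S}mulian, Mazur's theorem applied twice (once to keep the weak limit in $\gS$, once to convert the norm-closed convex sublevel sets into weakly closed ones, yielding weak lower semicontinuity of $f$) --- is exactly the standard argument underlying the cited proposition, and each step is sound, including your correct identification of the paper's ``closed epigraph'' hypothesis with closedness of the sublevel sets.
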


\Citet{xu_generalized_2019} and \citet{lin_reproducing_2019} also reference \citet{ekeland_infinite-dimensional_1983} as a source for Lemma \ref{lem:convex_existence}.

We now restate Theorem \ref{thm:representer} with the conditions on $\B_\X$ and $\B_\Y$ filled in.
\begin{customthm}{\ref{thm:representer}, Revisited}
    Suppose we have a kernel learning problem of the form in \eqref{eq:kernel_problem}.
Let $\K: \X \times \Y \to \R$, $\K(x_i, y_i) = \Fmap{\Phi_\X(x_i)}{\Phi_\Y(y_i)} 
        = 
\Bmap{f_{\Phi_\Y(y)}}{g_{\Phi_\X(x)}}
        $
        be a reproducing kernel satisfying Definitions \ref{def:rkbs} and \ref{def:featuremaps}.
    Assume that $\{\K(x_i, \cdot)\}_{i=1}^{n_x}$ is linearly independent in $\B_\Y$ and that $\{\K(\cdot, y_j)\}_{j=1}^{n_y}$ is linearly independent in $\B_\X$.
    Assume also that $\B_\X$ and $\B_\Y$ are reflexive, strictly convex, and smooth.
    Then, the regularized empirical risk minimization problem \eqref{eq:kernel_problem}
    has a unique solution pair $(f^*, g^*)$, with the property that
\begin{align*}
        \iota(f^\ast) = \sum_{i=1}^{n_x} \xi_i \K(x_i, \cdot) \qquad
        \iota(g^\ast) = \sum_{j=1}^{n_y} \zeta_j \K(\cdot, y_j).
    \end{align*}
    where $\xi_i,\zeta_j \in \R$.
\end{customthm}
\begin{proof}
    As before, we begin by proving existence and uniqueness of the solution pair $(f^*, g^*)$.
    
    We first prove uniqueness using some basic facts about convexity.
    Assume that there exist two distinct minimizers $(f^*_1, g^*_1), (f^*_2, g^*_2) \in \B_\X \oplus \B_\Y$.
    Define $(f^*_3, g^*_3) = \nicefrac{1}{2}[(f^*_1, g^*_1) + (f^*_2, g^*_2)]$.
    Then, since $\B_\X$ and $\B_\Y$ are strictly convex, we have
    \begin{alignat*}{3}
        \|f^*_3 \|_{\B_\X}
        &= \left\| \frac{1}{2} (f^*_1 + f^*_2) \right\|_{\B_\X}
        &&< \frac{1}{2} \| f^*_1 \|_{\B_\X} + \frac{1}{2} \| f^*_2 \|_{\B_\X} \\
        \|g^*_3 \|_{\B_\Y}
        &= \left\| \frac{1}{2} (g^*_1 + g^*_2) \right\|_{\B_\Y}
        &&< \frac{1}{2} \| g^*_1 \|_{\B_\Y} + \frac{1}{2} \| g^*_2 \|_{\B_\Y}
    \end{alignat*}
    and since $R_\X$ and $R_\Y$ are convex and strictly increasing,
\begin{align*}
        R_\X(\|f^*_3\|_{\B_\X}) &=
        \begin{aligned}[t]
            R_\X \left( \left\| \frac{1}{2}( f^*_1 + f^*_2) \right\|_{\B_\X} \right) &< R_\X \left(\frac{1}{2} \|f^*_1\|_{\B_\X} + \frac{1}{2} \|f^*_2\|_{\B_\X} \right) \\
            &\leq \frac{1}{2} R_\X(\|f^*_1\|_{\B_\X} + \frac{1}{2} R_\X (\|f^*_2\|_{\B_\X})
        \end{aligned}
    \end{align*}
and
\begin{align*}
        R_\Y(\|g^*_3\|_{\B_\Y}) &=
        \begin{aligned}[t]
            R_\Y \left( \left\| \frac{1}{2}( g^*_1 + g^*_2) \right\|_{\B_\Y} \right) &< R_\Y \left(\frac{1}{2} \|g^*_1\|_{\B_\Y} + \frac{1}{2} \|g^*_2\|_{\B_\Y} \right) \\
            &\leq \frac{1}{2} R_\Y(\|g^*_1\|_{\B_\Y} + \frac{1}{2} R_\Y (\|g^*_2\|_{\B_\Y}).
        \end{aligned}
    \end{align*}
Consider the regularized empirical risk minimization cost function \eqref{eq:kernel_problem}
    \begin{equation*}
            \gT(f, g) = \gL(f, g)
            + \lambda_\X  R_\X(\| f \|_{\B_\X})
            + \lambda_\Y  R_\Y(\| g \|_{\B_\Y})
    \end{equation*}
    where we use the shorthand
    \begin{align*}
\gL(f, g) = \frac{1}{n_x n_y} \sum_{i,j}
        L \left(x_i, y_j, \z_{ij},
        \Bmap{f_{\Phi_\Y(y)}}{g_{\Phi_\X(x)}}
        \right).
    \end{align*}
    We have that $R_\X( \|\cdot \|_{\B_\X})$ and $R_\Y( \|\cdot \|_{\B_\Y})$ are both convex via identities about composition of convex functions.
    The function $\gL(f, g)$ is also convex since all the functions in the summand are convex in $f$ and $g$.

    Then, since we have assumed that $\gT(f^*_1, g^*_1) = \gT(f^*_2, g^*_2)$, by plugging in some of the above inequalities we can write
\begin{align*}
        \gT(f^*_3, g^*_3)
        &= \gT \left( \frac{1}{2} \left[ (f^*_1, g^*_1) + (f^*_2, g^*_2) \right] \right) \\
        &= 
        \begin{aligned}[t]
            \gL \left( \frac{1}{2} \left[ \left( f^*_1, g^*_1 \right) + \left( f^*_2, g^*_2 \right)  \right]  \right)
                &+ R_\X \left( \left\| \frac{1}{2} \left( f^*_1 + f^*_2 \right) \right\|_{\B_\X} \right) \\
                &+ R_\Y \left( \left\| \frac{1}{2} \left( g^*_1 + g^*_2 \right) \right\|_{\B_\Y} \right) 
        \end{aligned}\\
        &< 
        \begin{aligned}[t]
        \frac{1}{2} \gL \left( f^*_1, g^*_1 \right) + \frac{1}{2} \gL \left( f^*_2, g^*_2 \right) 
            &+ \frac{1}{2} R_\X \left( \| f^*_1 \|_{\B_\X} \right) + \frac{1}{2} R_\X \left( \| f^*_2 \|_{\B_\X} \right) \\
            &+ \frac{1}{2} R_\Y \left( \| f^*_1 \|_{\B_\Y} \right) + \frac{1}{2} R_\Y \left( \| f^*_2 \|_{\B_\Y} \right)
        \end{aligned} \\
        &= \frac{1}{2} \gT( f^*_1, g^*_1) + \frac{1}{2} \gT(f^*_2, g^*_2) \\
        &= \gT(f^*_1, g^*_1)
    \end{align*}
contradicting that $(f^*_1, g^*_1)$ is a minimizer, and thus showing uniqueness of the solution.

    We now prove existence via Lemma \ref{lem:convex_existence}.
    We already know that $\gT(\cdot)$ is convex.
    From the bilinearity of $\Bmap{\cdot}{\cdot}$ and the convexity of $L$, $L$ is continuous in $f$ and $g$.
    Since the regularization functions $R_\X$ and $R_\Y$ are convex and strictly increasing, is follows that the functions $R_\X(\|f\|_{\B_\X})$ and $R_\Y(\|g\|_{\B_\Y})$ are continuous in $f$ and $g$, respectively.
    Thus, $\gT(f, g)$ is continuous.
    Consider the set
    \begin{equation*}
        \gE = \{ (f, g) \in \B_\X \oplus \B_\Y: \gT(f, g) \leq \gT(0, 0) \}.
    \end{equation*}
    The set $\gE$ is nonempty (it contains at least $(0, 0)$), and we can see that 
    \begin{align*}
        \|f, g\|_{\B_\X \oplus \B_\Y} &= \|f\|_{\B_\X} + \|g\|_{\B_\Y} \\
            &\leq R_\X^{-1}(\gT(f, 0)) + R_\Y^{-1}(\gT(0, g))
    \end{align*}
    showing that $\gE$ is bounded.
    So, by Lemma \ref{lem:convex_existence}, we are guaranteed the existence of an optimal solution $(f^*, g^*)$.

    Pick any $(f, g) \in \B_\X \times \B_\Y$ and consider the set
\begin{equation*}
        D_{f,g} = \left\{ \left(x_i, y_j, \Bmap{f_{\Phi_\Y(y_j)}}{g_{\Phi_\X(x_i)}} \right): \quad i = 1, \dots, n_x; \quad j = 1, \dots, n_y \right\}
    \end{equation*}
i.e., the set of pairs of points $(x_i, y_j)$ along with the value that the function pair $(f, g)$ maps to via the bilinear form at the pair of points $(x_i, y_j)$.

    From Lemma \ref{lem:min_norm}, there exists an element $(f', g') \in \B_\X \times \B_\Y$ such that $(f', g')$ interpolates $D_{f,g}$ perfectly, i.e.,
\begin{equation*}
        \Bmapbig{f_{\Phi_\Y(y_j)}}{g_{\Phi_\X(x_i)}} = \Bmapbig{f'_{\Phi_\Y(y_j)}}{g'_{\Phi_\X(x_i)}};
            \quad i = 1, \dots, n_x; \quad j = 1, \dots, n_y
    \end{equation*}
whose G\^{a}teaux derivatives of norms satisfy
    \begin{align*}
        \iota(f') &\in \cls\{ \K(x_i, \cdot); \quad i = 1,\dots, n_x \} \\
        \iota(g') &\in \cls\{ \K(\cdot, y_j); \quad j = 1,\dots, n_y \}.
    \end{align*}
    Further, this element $(f', g')$ obtains the minimum-norm interpolation of $D_{f,g}$, i.e.,
    \begin{equation*}
        \| f', g' \|_{\B_\X \oplus \B_\Y} \leq \| f, g \|_{\B_\X \oplus \B_\Y}.
    \end{equation*}
    This last fact implies
    \begin{equation*}
        T(f', g') \leq T(f, g).
    \end{equation*}

    Therefore, the unique optimal solution $(f^*, g^*)$ also satisfies
    \begin{align*}
        \begin{split}
            \iota(f^*) &\in \cls\{ \K(x_i, \cdot); \quad i = 1,\dots, n_x \} \\
            \iota(g^*) &\in \cls\{ \K(\cdot, y_j); \quad j = 1,\dots, n_y \}
        \end{split}
    \end{align*}
    which implies that suitable parameters $\xi_1, \dots, \xi_{n_x} \in \R$ and $\zeta_1, \dots, \zeta_{n_y} \in \R$ exist such that
    \begin{equation*}
        \iota(f^\ast) = \sum_{i=1}^{n_x} \xi_i \K(x_i, \cdot) \qquad
        \iota(g^\ast) = \sum_{j=1}^{n_y} \zeta_j \K(\cdot, y_j)
    \end{equation*}
    proving the claim. \qedhere
\end{proof}

\section{Proof of Theorem \ref{th:approx}}
\label{app:approx}
First, we state the following well-known lemma.
\begin{lemma}
    \label{lemma:stone}
    For any two compact Hausdorff spaces $\X$ and $\Y$, continuous function $\K: \X \times \Y \to \R$, and $\epsilon > 0$, there exists an integer $d > 0$ and continuous functions $\phi_\ell : \X \to \R$, $\psi_\ell : \Y \to \R$, $\ell = 1, \dots, d$ such that
    \begin{equation*}
       \left| \K(\x, \y) - \sum_{\ell=1}^{d} \phi_\ell(\x) \psi_\ell(\y) \right| < \epsilon \quad \forall \; \x \in \X, \y \in \Y.
    \end{equation*}
\end{lemma}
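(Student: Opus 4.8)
The plan is to recognize the statement as a direct application of the Stone--Weierstrass theorem on the product space $\X \times \Y$. First I would observe that a finite product of compact Hausdorff spaces is again compact and Hausdorff, so $\X \times \Y$ is a compact Hausdorff space and $C(\X \times \Y)$, the real-valued continuous functions on it equipped with the supremum norm, is the ambient space in which the approximation is to take place. The target $\K$ is an element of $C(\X \times \Y)$, and any finite sum $\sum_{\ell=1}^d \phi_\ell(\x)\psi_\ell(\y)$ with $\phi_\ell \in C(\X)$ and $\psi_\ell \in C(\Y)$ is likewise continuous on the product, so the entire problem reduces to showing that such sums are uniformly dense in $C(\X \times \Y)$.

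To that end I would introduce the set $\gA = \{ \sum_{\ell=1}^d \phi_\ell(\x)\psi_\ell(\y) : d \in \sN,\ \phi_\ell \in C(\X),\ \psi_\ell \in C(\Y) \} \subseteq C(\X \times \Y)$ and verify the three hypotheses of Stone--Weierstrass: that $\gA$ is a subalgebra, contains the constants, and separates points. Closure under addition and scalar multiplication is immediate from the definition. Closure under multiplication follows by expanding a product of two such sums, $(\sum_i \phi_i \psi_i)(\sum_j \phi'_j \psi'_j) = \sum_{i,j} (\phi_i \phi'_j)(\psi_i \psi'_j)$, and noting that each $\phi_i \phi'_j \in C(\X)$ and each $\psi_i \psi'_j \in C(\Y)$, so the product is again of the prescribed form. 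The constant function $1$ lies in $\gA$ via the single term $\phi_1 \equiv 1$, $\psi_1 \equiv 1$.

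The substantive step is point separation. Given distinct points $(\x_1, \y_1) \neq (\x_2, \y_2)$ in $\X \times \Y$, at least one coordinate differs, say $\x_1 \neq \x_2$. Here I would use that a compact Hausdorff space is normal, so Urysohn's lemma supplies a $\phi \in C(\X)$ with $\phi(\x_1) \neq \phi(\x_2)$; then the single-term element $(\x, \y) \mapsto \phi(\x) \cdot 1$ of $\gA$ separates the two points, and symmetrically when $\y_1 \neq \y_2$. With all three hypotheses in hand, Stone--Weierstrass gives that $\gA$ is dense in $C(\X \times \Y)$ in the supremum norm, so for the given $\K$ and $\epsilon$ there is an element $\sum_{\ell=1}^d \phi_\ell(\x)\psi_\ell(\y) \in \gA$ whose sup-distance to $\K$ is below $\epsilon$, which is exactly the displayed inequality.

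The only real obstacle is conceptual rather than computational: one must remember to invoke normality and Urysohn's lemma to guarantee that $C(\X)$ and $C(\Y)$ separate points of their respective factors (neither compactness nor Hausdorffness alone would suffice), and to check the multiplicative closure so that $\gA$ is genuinely a subalgebra rather than merely a linear subspace. Everything else is a routine confirmation of the Stone--Weierstrass hypotheses.
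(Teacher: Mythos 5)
Your proof is correct and takes essentially the same approach as the paper's: both reduce the claim to the Stone--Weierstrass theorem applied to the algebra of finite sums $\sum_{\ell} \phi_\ell(\x)\psi_\ell(\y)$ on the compact product space $\X \times \Y$. The only difference is one of detail: you explicitly verify the multiplicative closure and the point-separation hypothesis (via normality and Urysohn's lemma), steps the paper's proof simply declares ``easy to show.''
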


\begin{proof}
    The product space of two compact spaces $\X$ and $\Y$, $\X \times \Y$, is of course compact by Tychonoff's theorem.
    Consider the algebra
    \begin{equation*}
        \sA = \left\{ \hat{f}: \hat{f}(x, y) = \sum_{\ell=1}^\dh \phi_\ell(x) \psi_\ell(y), \;x \in \X, y \in \Y \right\}.
    \end{equation*}
    It is easy to show (i) that $\sA$ is an algebra, (ii) that $\sA$ is a subalgebra of the real-valued continuous functions on $\X \times \Y$, and (iii) that $\sA$ separates points.
    Then, combining the aforementioned facts, by the Stone-Weierstrass theorem $\sA$ is dense in the set of real-valued continuous functions on $\X \times \Y$.
\end{proof}

\begin{remark}
    In addition to helping us prove Theorem \ref{th:approx} below, Lemma \ref{lemma:stone} also serves as somewhat of an analog to Mercer's theorem for the more general case of asymmetric, non-PSD kernels.
    It is however weaker than Mercer's theorem in that the non-PSD nature of $\K$ means that the functions in the sum cannot be considered as eigenfunctions (with $\phi_\ell = \psi_\ell)$ with associated nonnegative eigenvalues.
\end{remark}

Now we proceed to the proof of Theorem \ref{th:approx}.

\begin{proof}
    To keep the equations from becoming too cluttered, below we use $\vq(\vt), \vk(\vs), \vphi(\vt), \vpsi(\vs) \in \R^\dh$ as the vector concatenation of the scalar functions $\{q_\ell(\vt)\}, \{k_\ell(\vs)\}, \{\phi_\ell(\vt)\}$, and $\{\psi_\ell(\vs)\}$, $\ell=1,\dots,\dh$, respectively.
    All sup norms are with respect to $\X \times \Y$.

    Our proof proceeds similarly to the proof of Theorem 5.1 of \citet{okuno_probabilistic_2018}.
    We generalize their theorem and proof to non-Mercer kernels and simplify some intermediate steps.
    First, by applying Lemma \ref{lemma:stone}, we can write that for any $\epsilon_1$, there is a $\dh$ such that
    \begin{equation} 
        \supnorm{ \K - \vphi^\T \vpsi } < \epsilon_1 \label{eq:eps1}
\end{equation}
Now we consider the approximation of $\phi_\ell$ and $\psi_\ell$ by $q_\ell$ and $k_\ell$, respectively.
By the universal approximation theorem of multilayer neural networks
\citep[etc.]{cybenko_approximation_1989,hornik_multilayer_1989,funahashi_approximate_1989,attali_approximations_1997},
we know that for any functions $\phi_\ell: \X \to \R$, $\psi_\ell: \Y \to \R$ and scalar $\epsilon_2 > 0$, there is an integer $m > 0$ such that if $q_\ell: \X \to \R$ and $k_\ell: \Y \to \R$ are two-layer neural networks with $m$ hidden units, then
\begin{equation}
   \supnorm{ \phi_\ell - q_\ell } < \epsilon_2
    \quad \textnormal{and} \quad
    \supnorm{ \psi_\ell - k_\ell } < \epsilon_2 \label{eq:eps2}
\end{equation}
for all $\ell$.
Now, beginning from \eqref{eq:kernel_approx}, we can write
\begin{equation}
    \supnorm{ \K - \vq^\T \vk }
    \leq \supnorm{ \K - \vphi^\T \vpsi } 
    + \supnorm{ \vphi^\T \vpsi - \vk^\T \vq }
    \label{eq:diff}
\end{equation}
by the triangle inequality. Examining the second term of the RHS of \eqref{eq:diff},
\begin{align}
    \supnorm{\vphi^\T \vpsi - \vq^\T \vk } 
    = & \supnorm{  \vphi^\T(\vpsi - \vk ) + (\vphi - \vq)^\T \vk } \nonumber \\
    \leq & \supnorm{  \vphi^\T(\vpsi - \vk ) }
        + \supnorm{ (\vphi - \vq)^\T \vk  } \nonumber \\
    \leq & \supnorm{ \vphi } \supnorm{ \vpsi - \vk } + \supnorm{ \vphi - \vq } \supnorm{\vk} \label{eq:postcz}
\end{align}
where the first inequality uses the triangle inequality and the second uses the Cauchy-Schwarz inequality.
Finally, 
we can combine \eqref{eq:eps1}-\eqref{eq:postcz} to write
\begin{align*}
    \supnorm{\K - \vq^\T \vk}
    &\leq
\supnorm{ \K - \vphi^\T \vpsi } + \supnorm{ \vphi } \supnorm{ \vpsi - \vk } \\
        &\quad  
+ \supnorm{ \vphi - \vq } \supnorm{\vk}
\\
    & < \; \epsilon_1 + \dh \epsilon_2 (\supnorm{\vphi} + \supnorm{\vpsi} + \dh \epsilon_2).
\end{align*}
Picking $\epsilon_1$ and $\epsilon_2$ appropriately, e.g. $\epsilon_1 = \nicefrac{\epsilon}{2}$ and 
$\epsilon_2~\leq~\frac{\sqrt{\dh^2 ((\supnorm{\vphi} + \supnorm{\vpsi})^2 + 2 \epsilon) }}{2\dh^2}$, completes the proof.
\end{proof}

\section{Experiment Implementation Details}
\label{sec:implementation}
\paragraph{Datasets} The 2014 International Workshop on Spoken Language Translation (IWSLT14) machine translation dataset is a dataset of transcriptions of TED talks (and translations of those transcriptions).
We use the popular German to English subset of the dataset.
We use the 2014 ``dev'' set as our test set, and a train/validation split suggested in demo code for \texttt{fairseq} \citep{ott_fairseq_2019}, where every 23rd line in the IWSLT14 training data is held out as a validation set.

The 2014 ACL Workshop on Statistical Machine Translation (WMT14) dataset is a collection of European Union Parliamentary proceedings, news stories, and web text with multiple translations.
We use \texttt{newstest2013} as our validation set and \texttt{newstest2014} as our test set.

The Stanford Sentiment Treebank (SST) \citep{socher_recursive_2013} is a sentiment analysis dataset with sentences taken from movie reviews.
We use two standard subtasks: binary classification (SST-2) and fine-grained classification (SST-5).
SST-2 is a subset of SST-5 with neutral-labeled sentences removed.
We use the standard training/validation/testing splits, which gives splits of 6920/872/1821 on SST-2 and 8544/1101/2210 on SST-5.

\paragraph{Data Preprocessing}
On both translation datasets, we use \texttt{sentencepiece}\footnote{\url{https://github.com/google/sentencepiece}} to tokenize and train a byte-pair encoding \citep{sennrich_neural_2016} on the training set.
We use a shared BPE vocabulary across the target and source languages.
Our resulting BPE vocabulary size is 8000 for IWSLT14 DE-EN and 32000 for WMT14 EN-FR.

For SST, we train a \texttt{sentencepiece} BPE for each subtask separately, obtaining BPE vocabularies of size 7465 for SST-2 and 7609 for SST-5.

\paragraph{Models}
Our models are written in Pytorch \citep{paszke_pytorch_2019}.
We make use of the \texttt{Fairseq} \citep{ott_fairseq_2019} library for training and evaluation.

In machine translation, we use 6 Transformer layers in both the encoder and decoder.
Both Transformer sublayers (attention and the two fully-connected layers) have a residual connection with the ``pre-norm'' \citep{wang_learning_2019-1} ordering of Layer normalization -> Attention or FC -> ReLU -> Add residual.
We use an embedding dimension of 512 for the learned token embeddings.
For IWSLT14, the attention sublayers use 4 heads with a per-head dimension $d$ of 128 and the fully-connected sublayers have a hidden dimension of 1024.
For WMT14, following \citet{vaswani_attention_2017}'s ``base'' model, the attention layers have 8 heads with a per-head dimension $d$ of 64 and the fully-connected sublayers have a hidden dimension of 2048.

For SST, we use a very small, encoder-only, Transformer variant, with only two Transformer layers.
The token embedding dimension is 64, each Transformer self-attention sublayer has 4 heads with per-head dimension $d$ of 16, and the fully-connected sublayers have a hidden dimension of 128.
To produce a sentence classification, the output of the second Transformer layer is average-pooled over the non-padding tokens, then passed to a classification head.
This classification head is a two-layer neural network with hidden dimension 64 and output dimension equal to the number of classes; this output vector becomes the class logits.

\paragraph{Training}
We train with the Adam optimizer \citep{kingma_adam_2015}.
Following \citet{vaswani_attention_2017}, for machine translation we set the Adam parameters $\beta_1=0.9, \beta_2=0.98$.

On IWSLT14 DE-EN, we schedule the learning rate to begin at 0.001 and then multiply by a factor of 0.1 when the validation BLEU does not increase for 3 epochs.
FOR WMT14 EN-FR, we decay proportionally to the inverse square root of the update step using \texttt{Fairseq}'s implementation.
For both datasets, we also use a linear warmup on the learning rate from 1e-7 to 0.001 over the first 4000 update steps.

On IWSLT14 DE-EN, we end training when the BLEU score does not improve for 7 epochs on the validation set.
On WMT14 EN-FR, we end training after 100k gradient updates (inclusive of the warmup stage), which gives us a final learning rate of 0.0002.
We train on the cross-entropy loss and employ label smoothing of 0.1.
We use minibatches with a maximum of about 10k source tokens on IWSLT14 DE-EN and 25k on WMT14 EN-FR
Also on WMT14, we ignore sentences with more than 1024 tokens.

For both SST subtasks, we also use a linear warmup from 1e-7 over 4000 warmup steps, but use an initial post-warmup learning rate of 0.0001.
Similar to IWSLT14, we decay the learning rate by multiplying by 0.1 when the validation accuracy does not increase for 3 epochs, and end training when the validation accuracy does not improve for 8 epochs.

\paragraph{Evaluation for machine translation}
Following \citet{vaswani_attention_2017}, we use beam-search decoding, with a beam length of 4 and a length penalty of 0.6, to generate sentences for evaluation.
We use \texttt{sacrebleu} \citep{post_call_2018} to generate BLEU scores.
We report whole-word case-sensitive BLEU.

\paragraph{Computational resource details}
Each IWSLT14 DE-EN training run was run on one Nvidia Quadro RTX 6000 video card and took about 2.5 hours. 
Each WMT14 EN-FR training run was run on two Nvidia Tesla V100 32 GB video cards and took about 36 hours.
Each SST training run was run on one Nvidia Titan Xp video card and took only a few minutes. 
\end{document}